\documentclass{llncs}
\usepackage{amsmath}
\usepackage{times}
\usepackage{indentfirst}
\usepackage{graphicx}
\usepackage{amssymb}

\newtheorem{faulse assertion}{faulse assertion}
\newtheorem{counter-example}{counter-example}

\begin{document}

\title{Applications of repeat degree on coverings of neighborhoods}         

\author{Hua Yao, William Zhu\thanks{Corresponding author.
E-mail: williamfengzhu@gmail.com(William Zhu)} }
\institute{Lab of Granular Computing,\\
Zhangzhou Normal University, Zhangzhou, China}



\date{\today}          
\maketitle

\begin{abstract}
In covering based rough sets, the neighborhood of an element is the intersection of all the covering blocks containing the element.
All the neighborhoods form a new covering called a covering of neighborhoods.
In the course of studying under what condition a covering of neighborhoods is a partition, the concept of repeat degree is proposed, with the help of which the issue is addressed. This paper studies further the application of repeat degree on coverings of neighborhoods. First, we investigate under what condition a covering of neighborhoods is the reduct of the covering inducing it. As a preparation for addressing this issue, we give a necessary and sufficient condition for a subset of a set family to be the reduct of the set family. Then we study under what condition two coverings induce a same relation and a same covering of neighborhoods. Finally, we give the method of calculating the covering according to repeat degree.
\newline
\textbf{Keywords.} Neighborhood; Covering of neighborhoods; Relation; Reduct; Repeat degree.
\end{abstract}

\section{Introduction}
Rough set theory is first proposed by Pawlak~\cite{Pawlak82Rough,Pawlak91Rough} for dealing with vagueness and granularity in information systems. In theory, rough sets have been connected with matroids~\cite{TangSheZhu12matroidal,WangZhuZhuMin12matroidalstructure}, lattices~\cite{Dai05Logic,EstajiHooshmandaslDavvaz12Roughappliedtolattice,Liu08Generalized,WangZhu13Quantitative}, hyperstructure theory~\cite{YamakKazanciDavvaz11Softhyperstructure},
topology~\cite{Kondo05OnTheStructure,LashinKozaeKhadraMedhat05Rough,Zhu07Topological}, fuzzy sets~\cite{KazanciYamakDavvaz08TheLower,WuLeungMi05OnCharacterizations}, and so on.
Rough set theory is built on equivalence relations or partitions.
But equivalence relations and partitions are too restrictive for many applications.
To address this issue, several meaningful extensions of Pawlak rough sets have been proposed.
Among them, Zakowski~\cite{Zakowski83Approximations} has used coverings to establish covering based rough set theory.
Many scholars~\cite{BonikowskiBryniarskiWybraniecSkardowska98Extensions,Bryniarski89ACalculus,ChenDeganZhangWenxiu2006Roughapproximations,Pomykala87Approximation,Pomykala88Ondefinability,ZhuWang03Reduction} have done deep researches on this theory.
Recently, covering based rough set theory gained some new development~\cite{MuratDiker2012Textures,TianYangQingguoLiBileiZhou2013Relatedfamily,YiyuYaoBingxueYao2012Coveringbased,YanlanZhangMaokangLuo2013Relationshipsbetweencovering}.

In covering based rough sets, the neighborhood of an element is the intersection
of all the covering blocks containing the element. All the neighborhoods
form a new covering called a covering of neighborhoods.
Among various types of covering based rough sets, there are some~\cite{SamantaChakraborty09Covering,QinGaoPei07OnCovering,Yao98Relational,WeihuaXuWenxiuZhang2007Measuringroughness,zhu2011covering} defined by neighborhoods. Furthermore, there are many properties~\cite{HuangZhu12Geometriclattice,LiuSai09AComparison,Ma12Onsometypes,Zhu09RelationshipBetween} of covering based rough sets associated with the properties of coverings of neighborhoods.
This makes coverings of neighborhoods be important research subject.
Lin~\cite{Lin88Neighborhoodsystems} augmented the relational database with neighborhoods.
Yao~\cite{Yao98Relational} presented a framework for the formulation, interpretation, and comparison of a specific class of
neighborhood systems (called 1-neighborhood systems) induced by binary relations and rough
set approximations.
By means of consistent functions based on neighborhoods, Wang et al.~\cite{WangChenSunHu12Communication} dealt with the reduction issues on covering decision systems. Many scholars~\cite{chenstructure,FanHuXiaoZhang12Study,QinGaoPei07OnCovering,YunGeBai11Axiomatization} and we studied under what condition a covering of neighborhoods is a partition. In the course of studying this issue, we proposed the concept of repeat degree. With the help of this concept, we addressed this issue as well as the issue that under what condition a covering of neighborhoods is equal to the covering inducing it.

In this paper, we study further the application of repeat degree on coverings of neighborhoods. First, we investigate under what condition a covering of neighborhoods is the reduct of the covering inducing it. As a preparation for addressing this issue, we give a necessary and sufficient condition for a subset of a set family to be the reduct of the set family. Then we study under what condition two coverings induce a same relation and a same covering of neighborhoods. We prove these two issues are equivalent. Finally, we give the method of calculating the covering according to repeat degree and prove that partial information of repeat degree cannot determine the covering.

The remainder of this paper is organized as follows.
In Section~\ref{S:Preliminaries}, we review the relevant concepts.
In Section~\ref{Repeat degree and its properties}, we introduce the concept of repeat degree and study some properties of it.
In Section~\ref{A condition for a covering of neighborhoods to be a reduct}, we first study under what condition a subset of a set family is a reduct of the set family. Then we present a sufficient and necessary condition for a covering of neighborhoods to be the reduct of the covering inducing it.
In Section~\ref{A condition for two coverings to induce a same relation and a same covering of neighborhoods}, we present a sufficient and necessary condition for two coverings to induce a same relation and a same covering of neighborhoods.
In Section~\ref{Calculating the covering by repeat degree}, we give the method of calculating the covering according to repeat degree.
Section~\ref{S:Conclusions} concludes this paper.

\section{Preliminaries}
\label{S:Preliminaries}

For a better understanding to this paper, in this section, some basic concepts are introduced. In this paper, we denote $\cup_{X\in S}X$ by $\cup S$, where $S$ is a set family.

\begin{definition}(Covering)
\label{definition1}
Let $U$ be a universe of discourse and $\mathbf{C}$ be a family of subsets of $U$.
If $\emptyset\notin\mathbf{C}$ and $\cup \mathbf{C}=U$, $\mathbf{C}$ is called a covering of $U$.
Every element of $\mathbf{C}$ is called a covering block.
\end{definition}

In the following discussion, unless stated to the contrary, the universe of discourse $U$ is considered to be
finite and nonempty.
Neighborhood~\cite{BonikowskiBryniarskiWybraniecSkardowska98Extensions,Lin88Neighborhoodsystems,Pomykala88Ondefinability,QinGaoPei07OnCovering,WangChenSunHu12Communication,Yao98Relational} is a concept used widely in covering based rough sets.
It is defined as follows.

\begin{definition}(Neighborhood)
\label{definition2}
Let $\mathbf{C}$ be a covering of $U$.
For any $x\in U$, the neighborhood of $x$ is defined by: $N_{\mathbf{C}}(x)=\cap\{K\in\mathbf{C}:x\in K\}$.
When there is no confusion, we omit the subscript $\mathbf{C}$.
\end{definition}

It is obvious $x\in N_{\mathbf{C}}(x)$ and for any $x\in K\in\mathbf{C}$, $N_{\mathbf{C}}(x)\subseteq K$.
The following proposition gives an important property of neighborhoods.

\begin{proposition}~\cite{WangChenSunHu12Communication}
\label{proposition3}
Let $\mathbf{C}$ be a covering of $U$.
For any $x,y\in U$, if $y\in N(x)$, $N(y)\subseteq N(x)$.
\end{proposition}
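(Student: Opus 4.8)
The plan is to unfold the definition of the neighborhood and to reduce the claimed set inclusion to a purely combinatorial statement about which covering blocks contain $x$ and which contain $y$. Recall that $N(x)=\cap\{K\in\mathbf{C}:x\in K\}$ is the intersection of the family of all blocks that contain $x$. The hypothesis $y\in N(x)$ says precisely that $y$ lies in \emph{every} block of this family, i.e. that for each $K\in\mathbf{C}$, $x\in K$ forces $y\in K$. So the first step is to read $y\in N(x)$ not as a statement about a single intersection, but as the containment of index families
$$\{K\in\mathbf{C}:x\in K\}\subseteq\{K\in\mathbf{C}:y\in K\}.$$

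With that containment in hand, the second step is to invoke the antitonicity of intersection: enlarging the family one intersects can only shrink the result. Since the family of blocks containing $y$ is at least as large as the family of blocks containing $x$, their intersections satisfy $\cap\{K\in\mathbf{C}:y\in K\}\subseteq\cap\{K\in\mathbf{C}:x\in K\}$, which is exactly $N(y)\subseteq N(x)$. If one prefers an element-chase, the same conclusion follows by taking an arbitrary $z\in N(y)$ and an arbitrary block $K$ with $x\in K$; the hypothesis gives $y\in K$, whence $N(y)\subseteq K$ (as $K$ is one of the blocks whose intersection defines $N(y)$), so $z\in K$, and since $K$ was an arbitrary block containing $x$ we get $z\in N(x)$.

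I do not expect a genuine obstacle here: the result is essentially immediate once the hypothesis is correctly translated. The only conceptual point worth flagging is the recognition that membership of $y$ in the intersection $N(x)$ is equivalent to the \emph{inclusion} of the two block-index families, after which monotonicity (more precisely, antitonicity) of the intersection operator finishes the argument with no case analysis or appeal to finiteness of $U$.
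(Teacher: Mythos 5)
Your proof is correct. Note that the paper does not prove this proposition at all---it is imported from Wang et al.~\cite{WangChenSunHu12Communication}---so there is no in-paper argument to compare against; your route (translating $y\in N(x)$ into the inclusion of block families $\{K\in\mathbf{C}:x\in K\}\subseteq\{K\in\mathbf{C}:y\in K\}$ and then applying antitonicity of intersection) is the standard argument, and both variants you give, the family-inclusion one and the element chase, are complete and valid.
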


If $y\in N(x)$ and $x\in N(y)$, by the above proposition, we have $N(x)=N(y)$.
All the neighborhoods induced by a covering of a universe form a set family. This set family is still a covering of the universe.
This type of set families have been studied by many scholars~\cite{chenstructure,FanHuXiaoZhang12Study,QinGaoPei07OnCovering,WangChenSunHu12Communication,YunGeBai11Axiomatization}. However, both the term and the mark of it are not identical. In this paper, we call it covering of neighborhoods and cite the mark proposed by Wang et al.~\cite{WangChenSunHu12Communication}.

\begin{definition}(Covering of neighborhoods)
\label{definition4}
Let $\mathbf{C}$ be a covering of $U$. The covering of neighborhoods induced by $\mathbf{C}$ is defined by:
$Cov(\mathbf{C})=\{N(x):x\in U\}$.
\end{definition}

According to $x\in N(x)$, it is obvious $\cup Cov(\mathbf{C})=\cup\mathbf{C}$.

\section{Repeat degree and its properties}
\label{Repeat degree and its properties}

In this section, we propose a concept called repeat degree and study the properties of it. Particularly, a relationship between it and neighborhoods is presented. In the following discussion, unless stated to the contrary, for any set family $\mathbf{C}$, $\cup\mathbf{C}$ is considered to be
finite and nonempty.

\begin{definition}(Repeat degree)
\label{definition5}
Let $\mathbf{C}$ be a covering on $\cup\mathbf{C}$ and $X\subseteq\cup\mathbf{C}$. The repeat degree of $X$ with respect to covering $\mathbf{C}$ is defined by: $\partial_{\mathbf{C}}(X)=|\{K\in\mathbf{C}:X\subseteq K\}|$. When there is no confusion, we omit the subscript $\mathbf{C}$.
\end{definition}

For the convenience of writing, we denote $\partial_{\mathbf{C}}(\{x\})$ as $\partial_{\mathbf{C}}(x)$.
According to the above definition, for any $X\subseteq Y\subseteq\cup\mathbf{C}$, it follows that $\partial_{\mathbf{C}}(X)\geq\partial_{\mathbf{C}}(Y)$.
To illustrate the concept of repeat degree, let us see the following example.

\begin{example}
\label{example6}
Let $U=\{1,2,3,4\}$ and $\mathbf{C}=\{\{1,2\},\{2,3,4\},\{3,4\}\}$.
Then $\partial(\emptyset)=3$, $\partial(1)=1$, $\partial(2)=\partial(3)=\partial(4)=2$,
$\partial(\{1,2\})=\partial(\{2,3\})=\partial(\{2,4\})=1$, $\partial(\{1,3\})=\partial(\{1,4\})=0$, $\partial(\{3,4\})=2$, $\partial(\{2,3,4\})=1$, $\partial(\{1,3,4\})=\partial(\{1,2,4\})=\partial(\{1,2,3\})=0$ and $\partial(\{1,2,3,4\})=0$.
\end{example}

Repeat degree has the following basic property.

\begin{proposition}
\label{proposition7}
Let $\mathbf{C}$ be a covering on $\cup\mathbf{C}$.
For any $x,y\in\cup\mathbf{C}$,
$\partial(x)=\partial(\{x,y\})\Leftrightarrow\{K\in\mathbf{C}:\{x\}\subseteq K\}=\{K\in\mathbf{C}:\{x,y\}\subseteq K\}$.
\end{proposition}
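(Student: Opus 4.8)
The plan is to unwind both sides into a statement about two explicit subfamilies of $\mathbf{C}$ and then reduce the equivalence to the elementary fact that, among finite sets, a subset of equal cardinality must coincide with the whole set. First I would name $A=\{K\in\mathbf{C}:\{x\}\subseteq K\}$ and $B=\{K\in\mathbf{C}:\{x,y\}\subseteq K\}$, so that by Definition~\ref{definition5} we have $\partial(x)=|A|$ and $\partial(\{x,y\})=|B|$. The inclusion $\{x\}\subseteq\{x,y\}$ immediately forces $B\subseteq A$: any block $K$ containing $\{x,y\}$ certainly contains $\{x\}$. This is exactly the monotonicity remark $\partial(X)\geq\partial(Y)$ for $X\subseteq Y$ recorded after Definition~\ref{definition5}, here specialized to $X=\{x\}$ and $Y=\{x,y\}$, but at the set level rather than only at the level of cardinalities.

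With this inclusion in hand the equivalence splits cleanly. For the direction $(\Leftarrow)$, if $A=B$ then taking cardinalities gives $|A|=|B|$, i.e.\ $\partial(x)=\partial(\{x,y\})$, with nothing further to check. For $(\Rightarrow)$, suppose $\partial(x)=\partial(\{x,y\})$, that is $|A|=|B|$. Since $\cup\mathbf{C}$ is finite, $\mathbf{C}$ is a finite family and hence $A$ and $B$ are finite sets; combining $B\subseteq A$ with $|A|=|B|$ then yields $A=B$, which is precisely the right-hand equality $\{K\in\mathbf{C}:\{x\}\subseteq K\}=\{K\in\mathbf{C}:\{x,y\}\subseteq K\}$.

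There is essentially no analytic difficulty here, so the only step I expect to require care is the appeal to finiteness. Equal cardinality together with containment gives equality only for finite sets, so the standing assumption that $\cup\mathbf{C}$ is finite, stated at the beginning of this section, is doing genuine work in the forward direction and should be cited explicitly rather than used silently. I would therefore present the finiteness of $\mathbf{C}$ and the containment $B\subseteq A$ as the two hypotheses feeding the cardinality comparison, treating everything else as bookkeeping.
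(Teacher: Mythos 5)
Your proof is correct and follows essentially the same route as the paper's: both hinge on the containment $\{K\in\mathbf{C}:\{x,y\}\subseteq K\}\subseteq\{K\in\mathbf{C}:\{x\}\subseteq K\}$ together with the finiteness of $\mathbf{C}$, so that equal cardinality forces equality of the two families. The only cosmetic difference is that the paper phrases the forward direction as a proof by contradiction (a proper finite subset would have strictly smaller cardinality), whereas you state the same fact directly.
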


\begin{proof}

$(\Rightarrow)$: It is obvious $\{K\in\mathbf{C}:\{x,y\}\subseteq K\}\subseteq\{K\in\mathbf{C}:\{x\}\subseteq K\}$.
If $\{K\in\mathbf{C}:\{x,y\}\subseteq K\}\neq\{K\in\mathbf{C}:\{x\}\subseteq K\}$, $\{K\in\mathbf{C}:\{x,y\}\subseteq K\}$ is a proper subset of $\{K\in\mathbf{C}:\{x\}\subseteq K\}$.
Since $\{K\in\mathbf{C}:\{x\}\subseteq K\}$ is a finite set, $|\{K\in\mathbf{C}:\{x,y\}\subseteq K\}|<|\{K\in\mathbf{C}:\{x\}\subseteq K\}|$.
Thus $\partial(\{x,y\})<\partial(x)$.
This is a contradiction to that $\partial(x)=\partial(\{x,y\})$.

$(\Leftarrow)$: It is straightforward. $\Box$
\end{proof}

According to the above proposition, we obtain an important relationship between repeat degree and neighborhoods.

\begin{proposition}
\label{proposition8}
Let $\mathbf{C}$ be a covering on $\cup\mathbf{C}$.
For any $x,y\in\cup\mathbf{C}$, $y\in N(x)$ iff $\partial(x)=\partial(\{x,y\})$.
\end{proposition}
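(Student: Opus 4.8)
The plan is to use Proposition~\ref{proposition7} as a bridge, so that the only real work is unfolding the neighborhood condition $y\in N(x)$ into a statement about the family of covering blocks containing $x$. The key observation is that, by the definition $N(x)=\cap\{K\in\mathbf{C}:x\in K\}$, the assertion $y\in N(x)$ means precisely that $y$ lies in \emph{every} block $K\in\mathbf{C}$ with $x\in K$; equivalently, every block containing $x$ also contains $y$, i.e. $\{K\in\mathbf{C}:x\in K\}=\{K\in\mathbf{C}:\{x,y\}\subseteq K\}$. Since Proposition~\ref{proposition7} already states that this last family-equality is equivalent to $\partial(x)=\partial(\{x,y\})$, the proposition will follow once I establish the equivalence between $y\in N(x)$ and that family-equality.

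First I would prove the forward direction. Assuming $y\in N(x)$, I read this as: for each $K\in\mathbf{C}$ with $x\in K$, we have $y\in K$, hence $\{x,y\}\subseteq K$. This gives the inclusion $\{K\in\mathbf{C}:x\in K\}\subseteq\{K\in\mathbf{C}:\{x,y\}\subseteq K\}$; the reverse inclusion is immediate because $\{x,y\}\subseteq K$ forces $x\in K$. Thus the two families coincide, and Proposition~\ref{proposition7} yields $\partial(x)=\partial(\{x,y\})$. For the converse I would run the argument backwards: assuming $\partial(x)=\partial(\{x,y\})$, Proposition~\ref{proposition7} gives $\{K\in\mathbf{C}:x\in K\}=\{K\in\mathbf{C}:\{x,y\}\subseteq K\}$, so every block containing $x$ also contains $y$, and therefore $y\in\cap\{K\in\mathbf{C}:x\in K\}=N(x)$.

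There is no genuine obstacle here, since Proposition~\ref{proposition7} does the quantitative heavy lifting; the one point requiring care is the correct unfolding of membership in an intersection into a universally quantified statement over all blocks containing $x$, and the recognition that this universal statement is exactly the set-theoretic equality appearing on the right-hand side of Proposition~\ref{proposition7}. Everything else is routine inclusion bookkeeping, so I expect the proof to be short.
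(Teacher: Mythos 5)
Your proof is correct and follows essentially the same route as the paper: both unfold $y\in N(x)$ into the family equality $\{K\in\mathbf{C}:x\in K\}=\{K\in\mathbf{C}:\{x,y\}\subseteq K\}$ and then invoke Proposition~\ref{proposition7} to translate that into $\partial(x)=\partial(\{x,y\})$. The only difference is presentational --- the paper writes a single chain of biconditionals while you argue the two directions separately --- so there is nothing substantive to change.
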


\begin{proof}
According to Proposition~\ref{proposition7}, we have\\
$y\in N(x)\Leftrightarrow\forall K((K\in\mathbf{C}\wedge x\in K)\rightarrow(y\in K))\Leftrightarrow\forall K((K\in\mathbf{C}\wedge x\in K)\rightarrow(K\in\mathbf{C}\wedge \{x,y\}\subseteq K))\Leftrightarrow\forall K((K\in\mathbf{C}\wedge x\in K)\leftrightarrow(K\in\mathbf{C}\wedge \{x,y\}\subseteq K))\Leftrightarrow\{K\in\mathbf{C}:\{x\}\subseteq K\}=\{K\in\mathbf{C}:\{x,y\}\subseteq K\}\Leftrightarrow\partial(x)=\partial(\{x,y\})$. $\Box$
\end{proof}

\section{A condition for a covering of neighborhoods to be a reduct}
\label{A condition for a covering of neighborhoods to be a reduct}

In 2003, Zhu et al.~\cite{ZhuWang03Reduction} proposed two concepts called reducible element and the reduct of a covering, which have important applications in covering based rough set theory. In this section, we will discuss under what condition a covering of neighborhoods is a reduct. First, we propose a new mark.

\begin{definition}
\label{definition23}
Let $\mathbf{C}$ be a covering on $\cup\mathbf{C}$. We define $I(\mathbf{C})=\{\cup D:D\subseteq \mathbf{C}\}$.
\end{definition}

$I(\mathbf{C})$ has the following simple property.

\begin{proposition}
\label{proposition24}
Let $\mathbf{C}$ be a covering on $\cup\mathbf{C}$ and $B\subseteq \mathbf{C}$. Then $I(B)\subseteq I(\mathbf{C})$.
\end{proposition}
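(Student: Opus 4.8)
The plan is to prove the inclusion $I(B)\subseteq I(\mathbf{C})$ directly by taking an arbitrary element of the left-hand side and showing it lies in the right-hand side, using nothing more than the definition of $I$ together with the transitivity of the subset relation. The entire argument hinges on unwinding the definition $I(\mathbf{C})=\{\cup D:D\subseteq \mathbf{C}\}$ and observing that the index families ranging over subsets of $B$ are, by hypothesis, also subsets of $\mathbf{C}$.

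First I would fix an arbitrary $X\in I(B)$. By Definition~\ref{definition23} applied to $B$, there exists a subfamily $D\subseteq B$ such that $X=\cup D$. Next I would invoke the hypothesis $B\subseteq \mathbf{C}$: combining $D\subseteq B$ with $B\subseteq \mathbf{C}$ yields $D\subseteq \mathbf{C}$ by transitivity of inclusion. Then, reading Definition~\ref{definition23} for $\mathbf{C}$ in the forward direction, the equality $X=\cup D$ with $D\subseteq \mathbf{C}$ exhibits $X$ as a union of a subfamily of $\mathbf{C}$, so $X\in I(\mathbf{C})$. Since $X$ was chosen arbitrarily in $I(B)$, this establishes $I(B)\subseteq I(\mathbf{C})$.

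Candidly, I do not expect any genuine obstacle here: the statement is essentially the monotonicity of the operator $I$ with respect to inclusion of set families, and it reduces to a single application of transitivity. The only point deserving a moment of care is the boundary case $D=\emptyset$, for which $\cup D=\emptyset$; this element is handled uniformly by the same reasoning, since the empty family is a subfamily of both $B$ and $\mathbf{C}$, so $\emptyset$ belongs to both $I(B)$ and $I(\mathbf{C})$ and the argument goes through without modification.
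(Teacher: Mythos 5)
Your proof is correct and is essentially identical to the paper's: both fix an arbitrary element of $I(B)$, write it as $\cup D$ for some $D\subseteq B$, and conclude $D\subseteq\mathbf{C}$ by transitivity, hence membership in $I(\mathbf{C})$. The extra remark about the case $D=\emptyset$ is harmless but unnecessary, since that case requires no separate treatment.
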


\begin{proof}
For any $K\in I(B)$, we know that there exists some $D\subseteq B$ such that $K=\cup D$. It is obvious $D\subseteq \mathbf{C}$. Thus $K\in I(\mathbf{C})$, therefore $I(B)\subseteq I(\mathbf{C})$.
$\Box$
\end{proof}

Based on Definition~\ref{definition23}, we introduce the concept of reducible element, which is somewhat different in form from its definition in ~\cite{ZhuWang03Reduction}.

\begin{definition}
\label{definition9}
Let $\mathbf{C}$ be a covering on $\cup\mathbf{C}$. The reducible element family of $\mathbf{C}$ is defined by: $S(\mathbf{C})=\{K\in\mathbf{C}:K\in I(\mathbf{C}-\{K\})\}$. If $K\in S(\mathbf{C})$, $K$ is called a reducible element of $\mathbf{C}$, otherwise $K$ is called an irreducible element of $\mathbf{C}$.
\end{definition}

The following proposition presents a simple property of $S(\mathbf{C})$.

\begin{proposition}
\label{proposition36}
Let $\mathbf{C}$ be a covering on $\cup\mathbf{C}$ and $B\subseteq\mathbf{C}$. Then $S(B)\subseteq S(\mathbf{C})$.
\end{proposition}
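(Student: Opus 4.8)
The plan is to prove the inclusion $S(B)\subseteq S(\mathbf{C})$ directly by a chase of definitions, leaning on the monotonicity of $I$ already established in Proposition~\ref{proposition24}. I would fix an arbitrary reducible element $K\in S(B)$ and verify that it meets both requirements for membership in $S(\mathbf{C})$, namely $K\in\mathbf{C}$ and $K\in I(\mathbf{C}-\{K\})$. Before starting, I would note in passing that $S(B)$ is well defined: since $\emptyset\notin\mathbf{C}$ forces $\emptyset\notin B$, the family $B$ is indeed a covering of $\cup B$, so Definition~\ref{definition9} applies to it.

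First I would unpack the hypothesis $K\in S(B)$. By Definition~\ref{definition9} applied to the covering $B$ on $\cup B$, this means $K\in B$ and $K\in I(B-\{K\})$; equivalently, there is a subfamily $D\subseteq B-\{K\}$ with $K=\cup D$. The first membership condition for $S(\mathbf{C})$ is then immediate: since $K\in B$ and $B\subseteq\mathbf{C}$, we have $K\in\mathbf{C}$.

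The key step is the second condition. The crucial observation is that deleting the same block $K$ preserves the inclusion, that is, $B-\{K\}\subseteq\mathbf{C}-\{K\}$. Applying Proposition~\ref{proposition24} with $B-\{K\}$ in place of $B$ and $\mathbf{C}-\{K\}$ in place of $\mathbf{C}$ yields $I(B-\{K\})\subseteq I(\mathbf{C}-\{K\})$. Since $K\in I(B-\{K\})$, we conclude $K\in I(\mathbf{C}-\{K\})$. Combined with $K\in\mathbf{C}$, this gives $K\in S(\mathbf{C})$, and as $K$ was arbitrary, $S(B)\subseteq S(\mathbf{C})$.

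I do not expect a genuine obstacle here; the argument is a routine monotonicity chase. The one point demanding care is to invoke Proposition~\ref{proposition24} on the truncated families $B-\{K\}$ and $\mathbf{C}-\{K\}$ rather than on $B$ and $\mathbf{C}$ themselves, so that the union witnessing $K$ is guaranteed to be drawn from blocks other than $K$; applying the proposition to the full families would not respect the "$-\{K\}$" appearing in the definition of reducibility.
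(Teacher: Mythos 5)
Your proof is correct and follows essentially the same route as the paper's: unpack $K\in S(B)$ via Definition~\ref{definition9}, then use the monotonicity of $I$ (Proposition~\ref{proposition24}) applied to $B-\{K\}\subseteq\mathbf{C}-\{K\}$ to conclude $K\in I(\mathbf{C}-\{K\})$ and hence $K\in S(\mathbf{C})$. The only difference is that you spell out explicitly that Proposition~\ref{proposition24} must be invoked on the truncated families, a step the paper's proof applies implicitly.
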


\begin{proof}

For any $A\in S(B)$, by Definition~\ref{definition9}, we know that $A\in B\wedge A\in I(B-\{A\})$. By $B\subseteq\mathbf{C}$ and Proposition~\ref{proposition24}, we have that $A\in\mathbf{C}\wedge A\in I(\mathbf{C}-\{A\})$. Hence $A\in S(\mathbf{C})$. Therefore $S(B)\subseteq S(\mathbf{C})$.
$\Box$

\end{proof}

The following proposition indicates that deleting a reducible
element in a covering will not make any
original reducible element become an irreducible element of the new covering.

\begin{proposition}~\cite{ZhuWang03Reduction}
\label{proposition22}
Let $\mathbf{C}$ be a covering on $\cup\mathbf{C}$ and $K_{1}\in S(\mathbf{C})$. $K\in S(\mathbf{C})-\{K_{1}\}$ iff $K\in S(\mathbf{C}-\{K_{1}\})$.
\end{proposition}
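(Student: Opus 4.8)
The plan is to establish the two directions of the biconditional separately, observing that one direction is essentially free while the other carries all the content.

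For the direction $(\Leftarrow)$, I would apply Proposition~\ref{proposition36} directly. Since $\mathbf{C}-\{K_{1}\}\subseteq\mathbf{C}$, that proposition gives $S(\mathbf{C}-\{K_{1}\})\subseteq S(\mathbf{C})$. Thus if $K\in S(\mathbf{C}-\{K_{1}\})$, then $K\in S(\mathbf{C})$; moreover $K\in\mathbf{C}-\{K_{1}\}$ forces $K\neq K_{1}$, so $K\in S(\mathbf{C})-\{K_{1}\}$. No further work is needed here.

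For the direction $(\Rightarrow)$, suppose $K\in S(\mathbf{C})-\{K_{1}\}$, so $K\in\mathbf{C}$, $K\neq K_{1}$, and by Definition~\ref{definition9} there is some $D\subseteq\mathbf{C}-\{K\}$ with $K=\cup D$. The goal is to exhibit a subfamily $D'\subseteq(\mathbf{C}-\{K_{1}\})-\{K\}=\mathbf{C}-\{K_{1},K\}$ with $K=\cup D'$, which yields $K\in S(\mathbf{C}-\{K_{1}\})$ since $K\in\mathbf{C}-\{K_{1}\}$. If $K_{1}\notin D$, then already $D\subseteq\mathbf{C}-\{K_{1},K\}$ and I would take $D'=D$. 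The interesting case is $K_{1}\in D$: here I would invoke the hypothesis $K_{1}\in S(\mathbf{C})$ to write $K_{1}=\cup D_{1}$ for some $D_{1}\subseteq\mathbf{C}-\{K_{1}\}$, and then substitute, setting $D'=D_{1}\cup(D-\{K_{1}\})$, so that $\cup D'=(\cup D_{1})\cup(\cup(D-\{K_{1}\}))=K_{1}\cup(\cup(D-\{K_{1}\}))=\cup D=K$.

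The main obstacle is verifying $D'\subseteq\mathbf{C}-\{K_{1},K\}$, i.e. that neither $K_{1}$ nor $K$ lies in $D'$. That $K_{1}\notin D'$ is immediate, since $D_{1}\subseteq\mathbf{C}-\{K_{1}\}$ and $D-\{K_{1}\}$ excludes $K_{1}$ by construction. The delicate point is $K\notin D'$: because $D\subseteq\mathbf{C}-\{K\}$ we have $K\notin D-\{K_{1}\}$, so it remains to rule out $K\in D_{1}$. This is exactly where the case assumption $K_{1}\in D$ is used: if $K\in D_{1}$ then $K\subseteq\cup D_{1}=K_{1}$, while $K_{1}\in D$ gives $K_{1}\subseteq\cup D=K$, forcing $K=K_{1}$ and contradicting $K\neq K_{1}$. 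Hence $K\notin D_{1}$, and the verification is complete. I expect this mutual-containment argument to be the only nontrivial step.
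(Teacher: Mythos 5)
Your proof is correct, but note that the paper itself offers nothing to compare it against: Proposition~\ref{proposition22} is stated without proof, cited from~\cite{ZhuWang03Reduction}, so your argument supplies what the paper omits. Both halves check out. The $(\Leftarrow)$ direction via Proposition~\ref{proposition36} is exactly how the paper handles the corresponding direction of its own generalization, Proposition~\ref{proposition35} (monotonicity of $S$ plus the observation that membership in $\mathbf{C}-\{K_{1}\}$ forces $K\neq K_{1}$). The $(\Rightarrow)$ direction is where the content lies, and you correctly identified the one delicate point: after substituting a decomposition $D_{1}$ of $K_{1}$ into the decomposition $D$ of $K$, one must verify that $K$ itself does not occur in $D_{1}$. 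Your mutual-containment argument ($K\in D_{1}$ gives $K\subseteq\cup D_{1}=K_{1}$, while $K_{1}\in D$ gives $K_{1}\subseteq\cup D=K$, forcing $K=K_{1}$, a contradiction) closes this cleanly; it uses the case hypothesis $K_{1}\in D$ in an essential way, and without it the step would indeed fail. One cosmetic remark: you could note that since $K\in\mathbf{C}$ and $\mathbf{C}$ is a covering, $K\neq\emptyset$, so the subfamilies involved are nonempty, but nothing in the argument actually depends on this.
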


For the convenience of application, we extend the above proposition.

\begin{proposition}
\label{proposition35}
Let $\mathbf{C}$ be a covering on $\cup\mathbf{C}$ and $F\subseteq S(\mathbf{C})$. $K\in S(\mathbf{C})-F$ iff $K\in S(\mathbf{C}-F)$.
\end{proposition}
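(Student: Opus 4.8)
The plan is to prove this by induction on $|F|$, using Proposition~\ref{proposition22} as the engine for the inductive step. Since $\cup\mathbf{C}$ is finite, $\mathbf{C}$ and hence $F$ are finite, so induction on $|F|$ is legitimate. The base case $F=\emptyset$ is immediate, because then both $S(\mathbf{C})-F$ and $S(\mathbf{C}-F)$ equal $S(\mathbf{C})$.

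For the inductive step, I would suppose the statement holds whenever the deleted family has size $n$, and take $F\subseteq S(\mathbf{C})$ with $|F|=n+1$. First I would pick any $K_{1}\in F$ and note that $K_{1}\in S(\mathbf{C})$, so Proposition~\ref{proposition22} applies and yields, for every $K$, the equivalence $K\in S(\mathbf{C})-\{K_{1}\}\Leftrightarrow K\in S(\mathbf{C}-\{K_{1}\})$; read as a statement about membership for all $K$, this is the set equality $S(\mathbf{C}-\{K_{1}\})=S(\mathbf{C})-\{K_{1}\}$.

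The crucial preparatory observation is that this set equality guarantees $F-\{K_{1}\}\subseteq S(\mathbf{C})-\{K_{1}\}=S(\mathbf{C}-\{K_{1}\})$, so the inductive hypothesis may be applied to the covering $\mathbf{C}-\{K_{1}\}$ together with the family $F-\{K_{1}\}$, whose size is $n$. Applying it gives, for every $K$,
$$K\in S(\mathbf{C}-\{K_{1}\})-(F-\{K_{1}\})\iff K\in S\big((\mathbf{C}-\{K_{1}\})-(F-\{K_{1}\})\big).$$
It then remains only to simplify both sides using $K_{1}\in F$: the identity $(\mathbf{C}-\{K_{1}\})-(F-\{K_{1}\})=\mathbf{C}-F$ handles the right-hand side, while substituting $S(\mathbf{C}-\{K_{1}\})=S(\mathbf{C})-\{K_{1}\}$ into the left-hand side yields $(S(\mathbf{C})-\{K_{1}\})-(F-\{K_{1}\})=S(\mathbf{C})-F$. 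Combining these gives $K\in S(\mathbf{C})-F\Leftrightarrow K\in S(\mathbf{C}-F)$ and completes the induction.

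The step I expect to be the main obstacle is verifying that the inductive hypothesis is actually applicable, namely that $F-\{K_{1}\}$ is a subset of the reducible element family of the \emph{smaller} covering $\mathbf{C}-\{K_{1}\}$, rather than merely a subset of $S(\mathbf{C})$. This is precisely what the set-equality form of Proposition~\ref{proposition22} supplies, which is why invoking it before descending the induction is essential; the remaining work is routine set bookkeeping resting on the fact that $K_{1}\in F$.
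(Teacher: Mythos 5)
Your proof is correct, and it runs on the same engine as the paper's: induction on $|F|$, with Proposition~\ref{proposition22} driving the inductive step and the same bookkeeping identity $(\mathbf{C}-\{K_{1}\})-(F-\{K_{1}\})=\mathbf{C}-F$ at the end. The difference is in how the two directions of the biconditional are organized. The paper proves only the implication $K\in S(\mathbf{C})-F\Rightarrow K\in S(\mathbf{C}-F)$ by induction, and disposes of the converse separately via the monotonicity of $S$ (Proposition~\ref{proposition36}): since $S(\mathbf{C}-F)\subseteq S(\mathbf{C})$ and every member of $S(\mathbf{C}-F)$ lies in $\mathbf{C}-F$ and hence outside $F$, the reverse inclusion is immediate. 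You instead read Proposition~\ref{proposition22} as the set equality $S(\mathbf{C}-\{K_{1}\})=S(\mathbf{C})-\{K_{1}\}$ and carry the full equality $S(\mathbf{C}-F)=S(\mathbf{C})-F$ through the induction, so both directions fall out of a single argument and Proposition~\ref{proposition36} is never invoked. The trade-off: the paper's split keeps a lighter inductive invariant and reuses a lemma already in hand, while yours is self-contained and makes explicit that what propagates down the induction is precisely the set equality. Your ``crucial preparatory observation'' (that $F-\{K_{1}\}\subseteq S(\mathbf{C}-\{K_{1}\})$, so the inductive hypothesis legitimately applies to the smaller covering) is the same step the paper performs by applying Proposition~\ref{proposition22} to each $B\in F'$ individually; your set-equality packaging just does this and the transfer of $K$ in one stroke. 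The base case $|F|=0$ versus the paper's $|F|=1$ is immaterial.
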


\begin{proof}

$(\Rightarrow)$:
Let $|F|=n$.
We prove this proposition using induction on $n$.
If $n=1$, this proposition follows from Proposition~\ref{proposition22}.
Assume that this proposition is true for $n=t$.
Now assume that $|F|=t+1$.
For any $L\in F$, by Proposition~\ref{proposition22}, we have that $K\in S(\mathbf{C}-\{L\})$.
Let $F^{\prime}=F-\{L\}$.
For any $B\in F^{\prime}$, by Proposition~\ref{proposition22}, we have that $B\in S(\mathbf{C}-\{L\})$.
Thus $F^{\prime}\subseteq S(\mathbf{C}-\{L\})$.
By $K\notin F$, we know that $K\notin F^{\prime}$.
Hence $K\in S(\mathbf{C}-\{L\})-F^{\prime}$.
It is obvious $|F^{\prime}|=t$.
By the induction hypothesis, we know that $K\in S((\mathbf{C}-\{L\})-F^{\prime})$.
Since $(\mathbf{C}-\{L\})-F^{\prime}=(\mathbf{C}-\{L\})-(F-\{L\})=\mathbf{C}-F$, $K\in S(\mathbf{C}-F)$.

$(\Leftarrow)$: By Proposition~\ref{proposition36}, we have that $S(\mathbf{C}-F)\subseteq S(\mathbf{C})$. For any $K\in S(\mathbf{C}-F)$, it is obvious $K\notin F$. Therefore $K\in S(\mathbf{C})-F$.
$\Box$

\end{proof}

Below we give the definition of the reduct of a covering, which is somewhat different in form from its definition in ~\cite{ZhuWang03Reduction}.

\begin{definition}(Reduct)
\label{definition10}
Let $\mathbf{C}$ be a covering on $\cup\mathbf{C}$. The reduct of $\mathbf{C}$ is defined by: $reduct(\mathbf{C})=\mathbf{C}-S(\mathbf{C})$.
\end{definition}

$reduct(\mathbf{C})$ has the following property.

\begin{proposition}
\label{proposition25}
$\mathbf{C}\subseteq I(reduct(\mathbf{C}))$.
\end{proposition}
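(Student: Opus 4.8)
The plan is to induct on the number $m = |S(\mathbf{C})|$ of reducible elements. When $m = 0$ we have $reduct(\mathbf{C}) = \mathbf{C}$, and the inclusion is immediate, since every $K \in \mathbf{C}$ can be written as $K = \cup\{K\}$ with $\{K\} \subseteq \mathbf{C}$, so that $K \in I(\mathbf{C})$ by Definition~\ref{definition23}.

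Before the inductive step I would record two auxiliary facts. First, that $I$ is idempotent, in the sense that $I(I(B)) = I(B)$: a union of members of $I(B)$ is a union of unions of subfamilies of $B$, and hence is itself the union of a single subfamily of $B$. Second, and this is the crux, that deleting one reducible element does not change the reduct, i.e. if $K_{1} \in S(\mathbf{C})$ then $reduct(\mathbf{C} - \{K_{1}\}) = reduct(\mathbf{C})$. This follows by unwinding Definition~\ref{definition10} and substituting $S(\mathbf{C} - \{K_{1}\}) = S(\mathbf{C}) - \{K_{1}\}$, which is Proposition~\ref{proposition22}; since one removes $K_{1}$ from both $\mathbf{C}$ and $S(\mathbf{C})$, the two deletions cancel and the set difference is unchanged.

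For the inductive step, assume the claim for all coverings with fewer than $m$ reducible elements and suppose $|S(\mathbf{C})| = m \geq 1$. Fix any $K_{1} \in S(\mathbf{C})$ and set $\mathbf{C}^{\prime} = \mathbf{C} - \{K_{1}\}$. By Proposition~\ref{proposition22} we have $|S(\mathbf{C}^{\prime})| = m - 1$, so the induction hypothesis gives $\mathbf{C}^{\prime} \subseteq I(reduct(\mathbf{C}^{\prime}))$, which by the reduct-invariance fact equals $I(reduct(\mathbf{C}))$. It then remains only to place $K_{1}$ itself inside $I(reduct(\mathbf{C}))$. Since $K_{1}$ is reducible, Definition~\ref{definition9} gives $K_{1} = \cup D$ for some $D \subseteq \mathbf{C} - \{K_{1}\} = \mathbf{C}^{\prime}$; every member of $D$ already lies in $I(reduct(\mathbf{C}))$, so $K_{1}$ is a union of members of $I(reduct(\mathbf{C}))$ and therefore, by idempotence, belongs to $I(reduct(\mathbf{C}))$. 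Combining the two parts yields $\mathbf{C} = \mathbf{C}^{\prime} \cup \{K_{1}\} \subseteq I(reduct(\mathbf{C}))$.

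The main obstacle is the reduct-invariance lemma: one must be sure that peeling off reducible elements one at a time always lands in the same reduct, independently of the order, so that the recursive decomposition of a reducible block into irreducible pieces terminates without circularity. Proposition~\ref{proposition22} is exactly what makes this bookkeeping go through; once it is in hand, the remaining steps are routine manipulations of unions together with the monotonicity of $I$ from Proposition~\ref{proposition24}.
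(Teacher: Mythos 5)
Your proof is correct, but it takes a genuinely different route from the paper's. The paper argues directly, with no induction at the level of the proposition: given a reducible $K\in S(\mathbf{C})$, it removes all the \emph{other} reducible elements in a single step, writing $reduct(\mathbf{C})\cup\{K\}=\mathbf{C}-(S(\mathbf{C})-\{K\})$ and invoking Proposition~\ref{proposition35} (the extension of Proposition~\ref{proposition22} from one reducible element to an arbitrary subset $F\subseteq S(\mathbf{C})$) with $F=S(\mathbf{C})-\{K\}$, to conclude $K\in S(reduct(\mathbf{C})\cup\{K\})$; by Definition~\ref{definition9} this exhibits $K$ directly as a union of a subfamily of $reduct(\mathbf{C})$. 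The iterative bookkeeping you carry out inside your proof is, in the paper, packaged once and for all into Proposition~\ref{proposition35}, which is itself proved by induction on $|F|$. Your argument instead inducts on $|S(\mathbf{C})|$, peels off one reducible element at a time using only Proposition~\ref{proposition22}, and compensates with two auxiliary facts, both true and correctly justified: idempotence of $I$ (to flatten a union of unions into a single union) and invariance of the reduct under deletion of a reducible element (which does follow from Proposition~\ref{proposition22} by the cancellation you describe, since $K_{1}\in S(\mathbf{C})$ makes the two deletions collapse to $\mathbf{C}-S(\mathbf{C})$). The paper's route buys a one-shot representation of each reducible block as a union of irreducible ones; yours avoids Proposition~\ref{proposition35} entirely and isolates the reduct-invariance lemma, which is a worthwhile statement in its own right. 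One small point you should make explicit: to apply the induction hypothesis to $\mathbf{C}^{\prime}=\mathbf{C}-\{K_{1}\}$ you need $\mathbf{C}^{\prime}$ to be a covering of the same universe, which holds because $K_{1}=\cup D$ with $D\subseteq\mathbf{C}^{\prime}$ forces $\cup\mathbf{C}^{\prime}=\cup\mathbf{C}$ and $\emptyset\notin\mathbf{C}^{\prime}$.
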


\begin{proof}

For any $K\in\mathbf{C}$, $K\in reduct(\mathbf{C})$ or $K\in S(\mathbf{C})$. If $K\in reduct(\mathbf{C})$, $\{K\}\subseteq reduct(\mathbf{C})$. By $K=\cup\{K\}$, we have that $K\in I(reduct(\mathbf{C}))$. If $K\in S(\mathbf{C})$, $reduct(\mathbf{C})\\
\cup\{K\}=(\mathbf{C}-S(\mathbf{C}))\cup\{K\}=\mathbf{C}-(S(\mathbf{C})-\{K\})$.
By Proposition~\ref{proposition35}, we have that $K\in S(\mathbf{C}-(S(\mathbf{C})-\{K\}))$.
Hence $K\in S(reduct(\mathbf{C})\cup\{K\})$.
By Definition~\ref{definition9}, we know that $K\in I((reduct(\mathbf{C})\cup\{K\})-\{K\})=I(reduct(\mathbf{C}))$. Thus $\mathbf{C}\subseteq I(reduct(\mathbf{C}))$.
$\Box$

\end{proof}

Based on the above proposition, we obtain the following proposition.

\begin{proposition}
\label{proposition26}
Let $\mathbf{C}$ be a covering on $\cup\mathbf{C}$ and $B\subseteq \mathbf{C}$. If $\mathbf{C}\subseteq I(B)$ and for any $K\in B$, $\mathbf{C}\nsubseteq I(B-\{K\})$, $B=reduct(\mathbf{C})$.
\end{proposition}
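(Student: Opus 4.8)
The plan is to prove the set equality $B=reduct(\mathbf{C})$ by establishing the two inclusions $reduct(\mathbf{C})\subseteq B$ and $B\subseteq reduct(\mathbf{C})$ separately. The first inclusion will use only hypothesis $\mathbf{C}\subseteq I(B)$ together with the definition of reducibility, while the second will exploit the minimality hypothesis in combination with the already-proved Proposition~\ref{proposition25}.

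First I would establish $reduct(\mathbf{C})\subseteq B$. Take any $K\in reduct(\mathbf{C})=\mathbf{C}-S(\mathbf{C})$; then $K\in\mathbf{C}$ and, by Definition~\ref{definition9}, $K\notin I(\mathbf{C}-\{K\})$, that is, $K$ is irreducible. Since $\mathbf{C}\subseteq I(B)$ by hypothesis, we have $K\in I(B)$, so $K=\cup D$ for some $D\subseteq B$. The key observation is that $K$ must actually belong to $B$: if instead $K\notin B$, then $K\notin D$ as well, so $D\subseteq B-\{K\}\subseteq\mathbf{C}-\{K\}$ (using $B\subseteq\mathbf{C}$), whence $K=\cup D\in I(\mathbf{C}-\{K\})$, contradicting the irreducibility of $K$. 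Therefore $K\in B$, giving the first inclusion.

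For the reverse inclusion $B\subseteq reduct(\mathbf{C})$, I would argue by contradiction. Suppose $B\neq reduct(\mathbf{C})$; combined with the inclusion just proved this means $reduct(\mathbf{C})\subsetneq B$, so there is some $K\in B-reduct(\mathbf{C})$. Then $reduct(\mathbf{C})\subseteq B-\{K\}$, and by the monotonicity of $I$ (Proposition~\ref{proposition24}) we get $I(reduct(\mathbf{C}))\subseteq I(B-\{K\})$. Since $\mathbf{C}\subseteq I(reduct(\mathbf{C}))$ by Proposition~\ref{proposition25}, it follows that $\mathbf{C}\subseteq I(B-\{K\})$, which directly contradicts the hypothesis that $\mathbf{C}\nsubseteq I(B-\{K\})$ for every $K\in B$. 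Hence $B=reduct(\mathbf{C})$.

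I expect the first inclusion to be the more delicate step, since that is where the abstract definition of a reducible element has to be converted into the concrete statement that an irreducible block can never be produced as a union of the remaining blocks, forcing it to appear in any generating subfamily. The second inclusion is then a short and clean consequence of the monotonicity and generation properties of $I$ developed earlier. An alternative, fully self-contained route for the second inclusion would avoid Proposition~\ref{proposition25} and instead show directly that any reducible $K\in B$ can be rewritten as a union of members of $B-\{K\}$: starting from a witnessing decomposition $K=\cup D_{0}$ with $D_{0}\subseteq\mathbf{C}-\{K\}$, expand each block of $D_{0}$ in terms of $B$ and verify (using $X\subseteq K$ for every $X\in D_{0}$) that $K$ itself never occurs in the resulting decomposition; this would again yield $\mathbf{C}\subseteq I(B-\{K\})$ and the same contradiction, but invoking Proposition~\ref{proposition25} is considerably shorter.
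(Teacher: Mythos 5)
Your proof is correct and follows essentially the same route as the paper's: the first inclusion uses exactly the paper's observation that an irreducible element cannot be written as a union of blocks from $\mathbf{C}-\{K\}$, hence must lie in $B$, and the second inclusion is the paper's argument verbatim, combining Proposition~\ref{proposition25} with the monotonicity of $I$ (Proposition~\ref{proposition24}) to contradict the minimality hypothesis.
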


\begin{proof}
Suppose $reduct(\mathbf{C})-B\neq\emptyset$ and $A\in reduct(\mathbf{C})-B$. It is obvious $A\notin S(\mathbf{C})$. Thus for any $D\subseteq\mathbf{C}-\{A\}$, it follows that $A\neq\cup D$. Since $B\subseteq\mathbf{C}-\{A\}$, $A\neq\cup F$ for any $F\subseteq B$. Hence $A\notin I(B)$. Thus $\mathbf{C}\nsubseteq I(B)$. It is contradictory. Therefore $reduct(\mathbf{C})\subseteq B$. Suppose $B-reduct(\mathbf{C})\neq\emptyset$ and $L\in B-reduct(\mathbf{C})$. It is obvious $reduct(\mathbf{C})\subseteq B-\{L\}$. By Proposition~\ref{proposition25} and Proposition~\ref{proposition24}, we have that $\mathbf{C}\subseteq I(reduct(\mathbf{C}))\subseteq I(B-\{L\})$. It is contradictory. Thus $B-reduct(\mathbf{C})=\emptyset$. Hence $B\subseteq reduct(\mathbf{C})$. Therefore $B=reduct(\mathbf{C})$.
$\Box$

\end{proof}

According to Propositions~\ref{proposition25} and~\ref{proposition26}, we obtain a necessary and sufficient condition for a subset of a set family to be the reduct of the family.

\begin{theorem}
\label{theorem27}
Let $\mathbf{C}$ be a covering on $\cup\mathbf{C}$ and $B\subseteq \mathbf{C}$. $B=reduct(\mathbf{C})$ iff $\mathbf{C}\subseteq I(B)$ and for any $K\in B$, $\mathbf{C}\nsubseteq I(B-\{K\})$.
\end{theorem}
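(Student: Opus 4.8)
The plan is to prove the two directions of the equivalence separately, relying almost entirely on the two propositions that immediately precede the statement. The backward direction $(\Leftarrow)$ asserts that the two displayed conditions force $B=reduct(\mathbf{C})$, and this is precisely the content of Proposition~\ref{proposition26}; so no new argument is needed there and I would simply invoke it.

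The forward direction $(\Rightarrow)$ is where the (very light) genuine work lies. Assuming $B=reduct(\mathbf{C})$, I must establish both displayed conditions. The first condition, $\mathbf{C}\subseteq I(B)$, is immediate from Proposition~\ref{proposition25}: since $B=reduct(\mathbf{C})$ we have $I(B)=I(reduct(\mathbf{C}))$, and Proposition~\ref{proposition25} states exactly $\mathbf{C}\subseteq I(reduct(\mathbf{C}))$. For the second condition I would fix an arbitrary $K\in B=reduct(\mathbf{C})=\mathbf{C}-S(\mathbf{C})$, so that $K$ is an irreducible element, which by Definition~\ref{definition9} means $K\notin I(\mathbf{C}-\{K\})$. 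The goal is to show $\mathbf{C}\nsubseteq I(B-\{K\})$, and the natural witness for this non-containment is $K$ itself. Since $B\subseteq\mathbf{C}$, we have $B-\{K\}\subseteq\mathbf{C}-\{K\}$, whence Proposition~\ref{proposition24} yields $I(B-\{K\})\subseteq I(\mathbf{C}-\{K\})$. Combining this with $K\notin I(\mathbf{C}-\{K\})$ gives $K\notin I(B-\{K\})$, and as $K\in\mathbf{C}$ this establishes $\mathbf{C}\nsubseteq I(B-\{K\})$, completing the direction.

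The main obstacle, such as it is, is recognizing that the correct witness for the non-containment $\mathbf{C}\nsubseteq I(B-\{K\})$ is the removed element $K$ itself, and then chaining the irreducibility of $K$ (Definition~\ref{definition9}) through the monotonicity of $I$ (Proposition~\ref{proposition24}). Once that witness is identified, everything reduces to a direct citation of Propositions~\ref{proposition25} and~\ref{proposition26} together with this one short inclusion argument, so I expect no further difficulty.
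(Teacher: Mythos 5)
Your proposal is correct and follows essentially the same route as the paper: the backward direction is a direct citation of Proposition~\ref{proposition26}, the first forward condition comes from Proposition~\ref{proposition25}, and the second forward condition uses exactly the paper's argument — the irreducibility of $K\in reduct(\mathbf{C})$ gives $K\notin I(\mathbf{C}-\{K\})$, and monotonicity of $I$ (Proposition~\ref{proposition24}) applied to $B-\{K\}\subseteq\mathbf{C}-\{K\}$ yields $K\notin I(B-\{K\})$, with $K$ serving as the witness for $\mathbf{C}\nsubseteq I(B-\{K\})$.
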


\begin{proof}
$(\Rightarrow)$: By Proposition~\ref{proposition25}, we have that $\mathbf{C}\subseteq I(reduct(\mathbf{C}))=I(B)$. For any $K\in reduct(\mathbf{C})$, we know that $K\notin I(\mathbf{C}-\{K\})$. By $(reduct(\mathbf{C})-\{K\})\subseteq(\mathbf{C}-\{K\})$ and Proposition~\ref{proposition24}, we have that $K\notin I(reduct(\mathbf{C})-\{K\})=I(B-\{K\})$.

$(\Leftarrow)$: It follows from Proposition~\ref{proposition26}.
$\Box$
\end{proof}

Considering both the concepts of reduct and covering of neighborhoods, we have the following proposition.

\begin{proposition}
\label{proposition11}
$reduct(Cov(\mathbf{C}))=Cov(\mathbf{C})$.
\end{proposition}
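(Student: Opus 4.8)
The plan is to show that no element of $Cov(\mathbf{C})$ is reducible, i.e.\ that $S(Cov(\mathbf{C}))=\emptyset$; since $reduct(Cov(\mathbf{C}))=Cov(\mathbf{C})-S(Cov(\mathbf{C}))$ by Definition~\ref{definition10}, the claim follows immediately. So the whole task reduces to proving that a neighborhood can never be written as a union of neighborhoods distinct from itself.

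To this end I would fix an arbitrary $N(x)\in Cov(\mathbf{C})$ and argue by contradiction that $N(x)\notin I(Cov(\mathbf{C})-\{N(x)\})$. Suppose instead there were a subfamily $D\subseteq Cov(\mathbf{C})-\{N(x)\}$ with $\cup D=N(x)$. Since $x\in N(x)=\cup D$, some member of $D$ must contain $x$; as every member of $Cov(\mathbf{C})$ is a neighborhood by Definition~\ref{definition4}, I write this member as $N(y)$ for a suitable $y$, so that $x\in N(y)$.

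The key step uses Proposition~\ref{proposition3}: from $x\in N(y)$ it gives $N(x)\subseteq N(y)$, while from $N(y)\in D$ we have $N(y)\subseteq\cup D=N(x)$. Hence $N(y)=N(x)$, contradicting $N(y)\in Cov(\mathbf{C})-\{N(x)\}$. Therefore $N(x)$ cannot lie in $I(Cov(\mathbf{C})-\{N(x)\})$, so $N(x)\notin S(Cov(\mathbf{C}))$ by Definition~\ref{definition9}. As $N(x)$ was arbitrary, $S(Cov(\mathbf{C}))=\emptyset$, which finishes the argument.

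I expect the only delicate point to be the direction in which Proposition~\ref{proposition3} is applied: one must notice that it is the membership $x\in N(y)$ (rather than $y\in N(x)$) that yields $N(x)\subseteq N(y)$, and then combine this with the reverse inclusion coming from $N(y)$ being one of the sets whose union equals $N(x)$. Everything else is routine bookkeeping with the definitions of $I$, $S$, and $reduct$, together with the elementary fact $x\in N(x)$ noted after Definition~\ref{definition2}.
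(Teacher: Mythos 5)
Your proposal is correct and follows essentially the same route as the paper: both reduce the claim to $S(Cov(\mathbf{C}))=\emptyset$, assume a decomposition $N(x)=\cup D$ with $D\subseteq Cov(\mathbf{C})-\{N(x)\}$, pick $N(y)\in D$ with $x\in N(y)$, and apply Proposition~\ref{proposition3} to get $N(x)\subseteq N(y)$ against the reverse inclusion $N(y)\subseteq\cup D=N(x)$. The only cosmetic difference is that the paper phrases the clash as $N(y)\subset N(x)$ versus $N(x)\subseteq N(y)$, while you derive $N(y)=N(x)$ and contradict $N(y)\in Cov(\mathbf{C})-\{N(x)\}$; these are the same argument.
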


\begin{proof}
By Definition~\ref{definition10}, we need to prove only $S(Cov(\mathbf{C}))=\emptyset$.
We use the proof by contradiction. Suppose $S(Cov(\mathbf{C}))\neq\emptyset$ and $N(x)\in S(Cov(\mathbf{C}))$, where $x\in\cup\mathbf{C}$. By Definition~\ref{definition9}, we know that there exists some $D\subseteq Cov(\mathbf{C})-\{N(x)\}$ such that $N(x)=\cup D$. Thus $x\in\cup D$. Then there exists some $N(y)\in D$ such that $x\in N(y)$. By Proposition~\ref{proposition3}, we have that $N(x)\subseteq N(y)$. But by $N(y)\in D\subseteq Cov(\mathbf{C})-\{N(x)\}$, we have that $N(y)\subset N(x)$. It is contradictory. $\Box$
\end{proof}

For obtaining a necessary and sufficient condition for a covering of neighborhoods to be a reduct, we propose the following concept.

\begin{definition}
\label{definition13}
Let $\mathbf{C}$ be a covering on $\cup\mathbf{C}$. For any $x\in\cup\mathbf{C}$, we define $\Gamma_{\mathbf{C}}(x)$ by: $\Gamma_{\mathbf{C}}(x)=\{K\in\mathbf{C}:x\in K\wedge\forall y(y\in K\rightarrow\partial_{\mathbf{C}}(\{x,y\})=\partial_{\mathbf{C}}(x))\}$. When there is no confusion, we omit the subscript $\mathbf{C}$.
\end{definition}

To illustrate the above definition, let us see an example.

\begin{example}
\label{example16}
Let $\mathbf{C}=\{\{1,2\},\{1,2,3\},\{3,4\}\}$. Then $\Gamma_{\mathbf{C}}(1)=\Gamma_{\mathbf{C}}(2)=\{\{1,2\}\}$, $\Gamma_{\mathbf{C}}(4)=\{\{3,4\}\}$, $\Gamma_{\mathbf{C}}(3)=\emptyset$ and $\{1,2,3\}\notin\Gamma_{\mathbf{C}}(1)\cup\Gamma_{\mathbf{C}}(2)\cup\Gamma_{\mathbf{C}}(3)\cup\Gamma_{\mathbf{C}}(4)$.
\end{example}

$\Gamma_{\mathbf{C}}(x)$ has the following property.

\begin{proposition}
\label{proposition14}
$|\Gamma_{\mathbf{C}}(x)|\leq1$.
\end{proposition}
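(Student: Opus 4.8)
The plan is to show that every block belonging to $\Gamma_{\mathbf{C}}(x)$ must coincide with the neighborhood $N(x)$; once this is established, $\Gamma_{\mathbf{C}}(x)$ contains at most the single element $N(x)$, and so $|\Gamma_{\mathbf{C}}(x)|\leq 1$.

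First I would unwind the defining condition of $\Gamma_{\mathbf{C}}(x)$ using the bridge between repeat degree and neighborhoods supplied by Proposition~\ref{proposition8}. That proposition gives, for any $y\in\cup\mathbf{C}$, the equivalence $y\in N(x)\Leftrightarrow\partial(x)=\partial(\{x,y\})$. Hence the clause $\forall y(y\in K\rightarrow\partial_{\mathbf{C}}(\{x,y\})=\partial_{\mathbf{C}}(x))$ appearing in Definition~\ref{definition13} is exactly the statement $\forall y(y\in K\rightarrow y\in N(x))$, that is, $K\subseteq N(x)$. So for a block $K\in\mathbf{C}$, membership $K\in\Gamma_{\mathbf{C}}(x)$ is equivalent to the conjunction of $x\in K$ and $K\subseteq N(x)$.

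Next I would invoke the elementary property of neighborhoods recorded just after Definition~\ref{definition2}: for any $K\in\mathbf{C}$ with $x\in K$ one has $N(x)\subseteq K$, since $N(x)=\cap\{K\in\mathbf{C}:x\in K\}$ is the intersection over all such blocks. Combining this with the inclusion $K\subseteq N(x)$ obtained above yields $K=N(x)$ by antisymmetry of set inclusion. Therefore any two blocks $K_{1},K_{2}\in\Gamma_{\mathbf{C}}(x)$ both equal $N(x)$ and hence coincide, which forces $|\Gamma_{\mathbf{C}}(x)|\leq 1$.

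I do not anticipate a genuine obstacle here; the only subtlety is to notice that the repeat-degree condition is precisely the containment $K\subseteq N(x)$, after which the double inclusion $N(x)\subseteq K\subseteq N(x)$ closes the argument immediately. Note that the bound is sharp: $\Gamma_{\mathbf{C}}(x)$ is empty exactly when $N(x)\notin\mathbf{C}$ and equals the singleton $\{N(x)\}$ otherwise, as already illustrated by Example~\ref{example16}.
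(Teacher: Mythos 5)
Your proof is correct, but it takes a different route from the paper's. The paper's proof never mentions $N(x)$: it takes two arbitrary blocks $K_{1},K_{2}\in\Gamma_{\mathbf{C}}(x)$, picks $y\in K_{1}$, and applies Proposition~\ref{proposition7} (the set-family identity $\{K\in\mathbf{C}:x\in K\}=\{K\in\mathbf{C}:\{x,y\}\subseteq K\}$) to conclude $y\in K_{2}$ from $x\in K_{2}$; this gives $K_{1}\subseteq K_{2}$, and symmetry finishes the argument. You instead route everything through the neighborhood: Proposition~\ref{proposition8} turns the defining clause of $\Gamma_{\mathbf{C}}(x)$ into $K\subseteq N(x)$, and the elementary containment $N(x)\subseteq K$ for any block $K$ containing $x$ then forces $K=N(x)$. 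Your version proves the strictly stronger statement $\Gamma_{\mathbf{C}}(x)\subseteq\{N_{\mathbf{C}}(x)\}$, which makes Proposition~\ref{proposition17} ($\cup\Gamma_{\mathbf{C}}(x)=N_{\mathbf{C}}(x)$ when $\Gamma_{\mathbf{C}}(x)\neq\emptyset$) and the backward direction of Proposition~\ref{proposition18} immediate corollaries, whereas the paper proves those separately by essentially repeating the same kind of argument; so your approach consolidates three results into one characterization. What the paper's approach buys in exchange is independence from the neighborhood machinery: it stays entirely inside the repeat-degree language of Proposition~\ref{proposition7}, which is marginally more self-contained at this point in the development.
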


\begin{proof}
Let $K_{1}\in\Gamma_{\mathbf{C}}(x)$, $K_{2}\in\Gamma_{\mathbf{C}}(x)$ and $y\in K_{1}$. By Definition~\ref{definition13}, we know that $\partial_{\mathbf{C}}(\{x,y\})=\partial_{\mathbf{C}}(x)$. By Proposition~\ref{proposition7}, we have that $\{K\in\mathbf{C}:x\in K\}=\{K\in\mathbf{C}:\{x,y\}\subseteq K\}$. Since $x\in K_{2}$, $K_{2}\in\{K\in\mathbf{C}:x\in K\}$. Hence $K_{2}\in\{K\in\mathbf{C}:\{x,y\}\subseteq K\}$, then $\{x,y\}\subseteq K_{2}$, thus $y\in K_{2}$. Hence $K_{1}\subseteq K_{2}$. Similarly, $K_{2}\subseteq K_{1}$. Therefore $K_{1}=K_{2}$. Hence $|\Gamma_{\mathbf{C}}(x)|\leq1$. $\Box$
\end{proof}

By the above proposition, we obtain the following corollary.

\begin{corollary}
\label{corollary60}
If $\Gamma_{\mathbf{C}}(x)\neq\emptyset$, $\cup\Gamma_{\mathbf{C}}(x)\in \mathbf{C}$.
\end{corollary}

The following proposition gives a relationship between $\Gamma_{\mathbf{C}}(x)$ and $N_{\mathbf{C}}(x)$.

\begin{proposition}
\label{proposition17}
If $\Gamma_{\mathbf{C}}(x)\neq\emptyset$, $\cup\Gamma_{\mathbf{C}}(x)=N_{\mathbf{C}}(x)$.
\end{proposition}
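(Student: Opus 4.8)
The plan is to prove the set equality $\cup\Gamma_{\mathbf{C}}(x)=N_{\mathbf{C}}(x)$ by establishing the two inclusions separately. Since we are assuming $\Gamma_{\mathbf{C}}(x)\neq\emptyset$, Proposition~\ref{proposition14} tells us that $|\Gamma_{\mathbf{C}}(x)|=1$, so there is a unique covering block $K_{0}$ with $\Gamma_{\mathbf{C}}(x)=\{K_{0}\}$ and hence $\cup\Gamma_{\mathbf{C}}(x)=K_{0}$. This reduces the claim to showing $K_{0}=N_{\mathbf{C}}(x)$, which is cleaner to handle.

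For the inclusion $K_{0}\subseteq N_{\mathbf{C}}(x)$, I would take any $y\in K_{0}$ and use the defining condition in Definition~\ref{definition13}, which gives $\partial_{\mathbf{C}}(\{x,y\})=\partial_{\mathbf{C}}(x)$. By Proposition~\ref{proposition8}, this is exactly the statement that $y\in N_{\mathbf{C}}(x)$. Thus every element of $K_{0}$ lies in $N_{\mathbf{C}}(x)$, giving the first inclusion directly.

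For the reverse inclusion $N_{\mathbf{C}}(x)\subseteq K_{0}$, the key observation is that $K_{0}\in\mathbf{C}$ (since $K_{0}\in\Gamma_{\mathbf{C}}(x)\subseteq\mathbf{C}$) and $x\in K_{0}$ by the definition of $\Gamma_{\mathbf{C}}(x)$. So $K_{0}$ is one of the covering blocks containing $x$, and therefore it participates in the intersection defining $N_{\mathbf{C}}(x)=\cap\{K\in\mathbf{C}:x\in K\}$. Consequently $N_{\mathbf{C}}(x)\subseteq K_{0}$ holds immediately, as the neighborhood is contained in each block containing $x$ (this is noted right after Definition~\ref{definition2}). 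Combining the two inclusions yields $K_{0}=N_{\mathbf{C}}(x)$, and hence $\cup\Gamma_{\mathbf{C}}(x)=N_{\mathbf{C}}(x)$.

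I do not expect any serious obstacle here; both inclusions follow almost mechanically once the setup via Proposition~\ref{proposition14} is in place. If there is any delicate point, it is making sure to invoke Proposition~\ref{proposition8} in the correct direction for the first inclusion, translating the repeat-degree equality into neighborhood membership, and to remember that the reverse inclusion relies only on the elementary fact that $N_{\mathbf{C}}(x)$ is contained in any block containing $x$, rather than on the $\Gamma$ condition at all.
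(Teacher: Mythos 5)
Your proof is correct and follows essentially the same route as the paper's: both establish the equality by showing $\cup\Gamma_{\mathbf{C}}(x)$ is contained in $N_{\mathbf{C}}(x)$ via the repeat-degree condition, and conversely that $\cup\Gamma_{\mathbf{C}}(x)$ is itself a covering block containing $x$, so $N_{\mathbf{C}}(x)\subseteq\cup\Gamma_{\mathbf{C}}(x)$. The only cosmetic difference is that you invoke Proposition~\ref{proposition8} directly for the first inclusion, while the paper unrolls the same content through Proposition~\ref{proposition7}, arguing membership in every block containing $x$; the substance is identical.
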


\begin{proof}
Let $x\in A\in\mathbf{C}$ and $y\in\cup\Gamma_{\mathbf{C}}(x)$. By Definition~\ref{definition13}, we know that $\partial_{\mathbf{C}}(\{x,y\})=\partial_{\mathbf{C}}(x)$. By Proposition~\ref{proposition7}, we have that $\{K\in\mathbf{C}:x\in K\}=\{K\in\mathbf{C}:\{x,y\}\subseteq K\}$. Since $A\in\{K\in\mathbf{C}:x\in K\}$, $A\in\{K\in\mathbf{C}:\{x,y\}\subseteq K\}$. Hence $\{x,y\}\subseteq A$, thus $y\in A$. Therefore $\cup\Gamma_{\mathbf{C}}(x)\subseteq A$. By Definition~\ref{definition13} and Corollary~\ref{corollary60}, we know that $\cup\Gamma_{\mathbf{C}}(x)\in\{K\in\mathbf{C}:x\in K\}$. Therefore $\cup\Gamma_{\mathbf{C}}(x)=\cap\{K\in\mathbf{C}:x\in K\}=N_{\mathbf{C}}(x)$. $\Box$
\end{proof}

Based on the above proposition, we have the following proposition.

\begin{proposition}
\label{proposition18}
$N_{\mathbf{C}}(x)\in\mathbf{C}$ iff $\Gamma_{\mathbf{C}}(x)\neq\emptyset$.
\end{proposition}

\begin{proof}
$(\Rightarrow)$: We use the proof by contradiction. Suppose $\Gamma_{\mathbf{C}}(x)=\emptyset$. For any $K\in\{L\in\mathbf{C}:x\in L\}$, it is obvious there exists some $y\in K$ such that $\partial_{\mathbf{C}}(\{x,y\})\neq\partial_{\mathbf{C}}(x)$. By Proposition~\ref{proposition8}, we know that $y\notin N_{\mathbf{C}}(x)$. Thus $K\neq N_{\mathbf{C}}(x)$. Hence $N_{\mathbf{C}}(x)\notin\{L\in\mathbf{C}:x\in L\}$. Since $x\in N_{\mathbf{C}}(x)$, $N_{\mathbf{C}}(x)\notin\mathbf{C}$. It is a contradiction to the hypothesis.

$(\Leftarrow)$: It follows from Proposition~\ref{proposition17} and Corollary~\ref{corollary60}.
$\Box$
\end{proof}

For obtaining and proving a necessary and sufficient condition for a covering of neighborhoods to be a reduct, we need the following simple property of neighborhoods.

\begin{proposition}
\label{proposition21}
$\mathbf{C}\subseteq I(Cov(\mathbf{C}))$.
\end{proposition}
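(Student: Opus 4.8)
The plan is to show directly that every covering block can be written as a union of neighborhoods, since membership in $I(Cov(\mathbf{C}))$ is exactly the property of being such a union. Fix an arbitrary $K\in\mathbf{C}$ and consider the subfamily $D=\{N_{\mathbf{C}}(x):x\in K\}$ of $Cov(\mathbf{C})$. My claim is that $K=\cup D$, from which $K\in I(Cov(\mathbf{C}))$ follows immediately by Definition~\ref{definition23}; since $K$ was arbitrary, this yields $\mathbf{C}\subseteq I(Cov(\mathbf{C}))$.

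To establish $K=\cup D$ I would argue by double inclusion, using only the two elementary facts about neighborhoods recorded just after Definition~\ref{definition2}. For the inclusion $K\subseteq\cup D$, note that each $x\in K$ satisfies $x\in N_{\mathbf{C}}(x)$ and $N_{\mathbf{C}}(x)\in D$, so $x\in\cup D$. For the reverse inclusion $\cup D\subseteq K$, observe that for every $x\in K$ we have $x\in K\in\mathbf{C}$, whence $N_{\mathbf{C}}(x)\subseteq K$; taking the union of these inclusions over all $x\in K$ gives $\cup D\subseteq K$.

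There is no real obstacle here: the statement is a direct consequence of the defining relations $x\in N_{\mathbf{C}}(x)$ and $N_{\mathbf{C}}(x)\subseteq K$ for $x\in K\in\mathbf{C}$. The only point worth verifying explicitly is that $D$ is genuinely a subfamily of $Cov(\mathbf{C})$, but this is immediate, since each $x\in K\subseteq\cup\mathbf{C}$ makes $N_{\mathbf{C}}(x)$ an element of $Cov(\mathbf{C})$ by Definition~\ref{definition4}. Thus the argument reduces entirely to the two inclusions above, and needs neither induction, nor proof by contradiction, nor any appeal to repeat degree.
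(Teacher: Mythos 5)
Your proof is correct and follows essentially the same route as the paper's: both show $K=\cup_{x\in K}N_{\mathbf{C}}(x)$ using the facts $x\in N_{\mathbf{C}}(x)$ and $N_{\mathbf{C}}(x)\subseteq K$ for $x\in K\in\mathbf{C}$. Your presentation by explicit double inclusion is just a rephrasing of the paper's chain $\{x\}\subseteq N_{\mathbf{C}}(x)\subseteq K$ followed by taking unions.
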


\begin{proof}
For any $K\in\mathbf{C}$ and any $x\in K$, we have that $\{x\}\subseteq N_{\mathbf{C}}(x)\subseteq K$. Then  $\cup_{x\in K}\{x\}\subseteq\cup_{x\in K}N_{\mathbf{C}}(x)\subseteq\cup_{x\in K}K$. Thus $K\subseteq\cup_{x\in K}N_{\mathbf{C}}(x)\subseteq K$. Hence $K=\cup_{x\in K}N_{\mathbf{C}}(x)$. Therefore $\mathbf{C}\subseteq I(Cov(\mathbf{C}))$. $\Box$
\end{proof}

Based on some above propositions, we obtain a necessary and sufficient condition for a covering of neighborhoods to be a reduct.

\begin{theorem}
\label{theorem19}
$Cov(\mathbf{C})=reduct(\mathbf{C})$ iff for any $x\in\cup\mathbf{C}$, $\Gamma_{\mathbf{C}}(x)\neq\emptyset$.
\end{theorem}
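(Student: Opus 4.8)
The plan is to prove the biconditional by combining the characterization of reducts from Theorem~\ref{theorem27} with the bridge Proposition~\ref{proposition18}, which translates the condition $\Gamma_{\mathbf{C}}(x)\neq\emptyset$ into the statement $N_{\mathbf{C}}(x)\in\mathbf{C}$. The key observation is that $Cov(\mathbf{C})\subseteq\mathbf{C}$ holds precisely when every neighborhood is itself a covering block, i.e. when $N_{\mathbf{C}}(x)\in\mathbf{C}$ for all $x$, and by Proposition~\ref{proposition18} this is equivalent to $\Gamma_{\mathbf{C}}(x)\neq\emptyset$ for all $x\in\cup\mathbf{C}$. So the whole theorem reduces to showing that, among subsets of $\mathbf{C}$, the reduct equals $Cov(\mathbf{C})$ exactly when $Cov(\mathbf{C})$ is actually a subfamily of $\mathbf{C}$.

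For the forward direction, I would assume $Cov(\mathbf{C})=reduct(\mathbf{C})$. Since $reduct(\mathbf{C})=\mathbf{C}-S(\mathbf{C})\subseteq\mathbf{C}$, this immediately gives $Cov(\mathbf{C})\subseteq\mathbf{C}$, hence $N_{\mathbf{C}}(x)\in\mathbf{C}$ for every $x$, and Proposition~\ref{proposition18} then yields $\Gamma_{\mathbf{C}}(x)\neq\emptyset$. This direction is essentially immediate once one notices that a reduct is a subfamily.

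For the converse, I would assume $\Gamma_{\mathbf{C}}(x)\neq\emptyset$ for all $x\in\cup\mathbf{C}$ and verify the two conditions of Theorem~\ref{theorem27} with $B=Cov(\mathbf{C})$. The first condition, $\mathbf{C}\subseteq I(Cov(\mathbf{C}))$, is handed to us directly by Proposition~\ref{proposition21} and needs no hypothesis at all. The remaining condition is that for every $N(x)\in Cov(\mathbf{C})$ we have $\mathbf{C}\nsubseteq I(Cov(\mathbf{C})-\{N(x)\})$. Here I would use the hypothesis via Proposition~\ref{proposition18} to get $N(x)\in\mathbf{C}$, and then argue by contradiction: if $\mathbf{C}\subseteq I(Cov(\mathbf{C})-\{N(x)\})$, then in particular $N(x)\in\mathbf{C}$ would be a union of other neighborhoods, each of which (being a neighborhood properly contained in $N(x)$, as in the argument of Proposition~\ref{proposition11}) is a proper subset of $N(x)$ by Proposition~\ref{proposition3}, forcing $N(x)$ to strictly contain itself — a contradiction. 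Once both conditions hold, Theorem~\ref{theorem27} gives $Cov(\mathbf{C})=reduct(\mathbf{C})$.

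The main obstacle I anticipate is the irredundancy condition in the converse direction, namely ruling out $\mathbf{C}\subseteq I(Cov(\mathbf{C})-\{N(x)\})$. The delicate point is that $x\in N(x)$ must be covered by some block in $I(Cov(\mathbf{C})-\{N(x)\})$, which forces $x$ into some other neighborhood $N(y)$ with $N(y)\subsetneq N(x)$; the clean way to close this is to reuse the monotonicity argument from Proposition~\ref{proposition11} showing that $S(Cov(\mathbf{C}))=\emptyset$, so that no neighborhood is a union of the others. Everything else is a direct invocation of the already-established propositions.
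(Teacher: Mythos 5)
Your proposal is correct and follows essentially the same route as the paper: the forward direction via $reduct(\mathbf{C})\subseteq\mathbf{C}$ together with Proposition~\ref{proposition18}, and the converse by verifying the two conditions of Theorem~\ref{theorem27} for $B=Cov(\mathbf{C})$ using Proposition~\ref{proposition21} and the fact that no neighborhood is a union of the others. The only cosmetic difference is that the paper invokes Proposition~\ref{proposition11} as a black box to rule out $N(x)\in I(Cov(\mathbf{C})-\{N(x)\})$, whereas you inline that proposition's contradiction argument (via Proposition~\ref{proposition3}); the content is identical.
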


\begin{proof}
$(\Rightarrow)$: We use the proof by contradiction. Suppose there exists some $x\in\cup\mathbf{C}$ such that $\Gamma_{\mathbf{C}}(x)=\emptyset$. By Proposition~\ref{proposition18}, we know that $N_{\mathbf{C}}(x)\notin\mathbf{C}$. Thus $Cov(\mathbf{C})\neq reduct(\mathbf{C})$.

$(\Leftarrow)$: By Proposition~\ref{proposition18}, we know that $Cov(\mathbf{C})\subseteq\mathbf{C}$.
For any $N_{\mathbf{C}}(x)$, by Proposition~\ref{proposition11}, we know that $N_{\mathbf{C}}(x)\notin I(Cov(\mathbf{C})-\{N_{\mathbf{C}}(x)\})$. Since $N_{\mathbf{C}}(x)\in\mathbf{C}$, $\mathbf{C}\nsubseteq I(Cov(\mathbf{C})-\{N_{\mathbf{C}}(x)\})$. Again by Proposition~\ref{proposition21} and Theorem~\ref{theorem27}, we know that $Cov(\mathbf{C})\\
=reduct(\mathbf{C})$. $\Box$
\end{proof}

\section{A condition for two coverings to induce a same relation and a same covering of neighborhoods}
\label{A condition for two coverings to induce a same relation and a same covering of neighborhoods}

In~\cite{XuWang05On}, a binary relation induced by a covering was proposed to establish the relationship between the relation based rough sets and the first type of covering based rough sets. Afterwards, this type of binary relation has been studied further~\cite{YanlanZhangMaokangLuo2013Relationshipsbetweencovering,Zhu09RelationshipBetween}. In this section, we will  give a necessary and sufficient condition for two coverings to induce a same relation. In addition, we will prove that under the same condition two coverings induce a same covering of neighborhoods.

\begin{definition}(Successor neighborhood)
\label{definition28}
Let $R$ be a binary relation on $U$ and $x\in U$. The successor neighborhood of $x$ is defined by: $S_{R}(x)=\{y:xRy\}$.
\end{definition}

Now we introduce the method of inducing a relation by a covering.

\begin{definition}(Relation induced by a covering~\cite{XuWang05On})
\label{definition29}
Let $\mathbf{C}$ be a covering of $\cup\mathbf{C}$. The relation induced by $\mathbf{C}$ is defined by: $R(\mathbf{C})=\{(x,y):
x\in\cup\mathbf{C}\wedge y\in N_{\mathbf{C}}(x)\}$.
\end{definition}

To illustrate the above definition and that two different coverings can induce a same relation, let us see an example.

\begin{example}
\label{example30}
Let $U=\{1,2,3\}$, $\mathbf{C_{1}}=\{\{1,2\},\{2,3\},\{3\}\}$ and $\mathbf{C_{2}}=\{\{1,2,3\},\{1,2\},\\
\{2,3\},\{3\}\}$.
Then $N_{\mathbf{C_{1}}}(1)=N_{\mathbf{C_{2}}}(1)=\{1,2\}$, $N_{\mathbf{C_{1}}}(2)=N_{\mathbf{C_{2}}}(2)=\{2\}$ and $N_{\mathbf{C_{1}}}(3)=N_{\mathbf{C_{2}}}(3)=\{3\}$. Thus $R(\mathbf{C_{1}})=\{(1,1),(1,2),
(2,2),(3,3)\}=R(\mathbf{C_{2}})$.
\end{example}

By Definitions~\ref{definition28} and~\ref{definition29}, we obtain the following simple proposition.

\begin{proposition}
\label{proposition31}
$S_{R(\mathbf{C})}(x)=N_{\mathbf{C}}(x)$.
\end{proposition}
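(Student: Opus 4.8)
The plan is to prove this equality by directly unfolding the two definitions involved, namely Definition~\ref{definition28} of the successor neighborhood and Definition~\ref{definition29} of the relation induced by a covering, and then observing that the resulting set-builder descriptions of the two sides coincide. Since both sides are subsets of $\cup\mathbf{C}$, it suffices to show that the membership conditions defining them are logically equivalent; this yields the set equality in one stroke, without needing to argue two separate inclusions.

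First I would fix an arbitrary $x\in\cup\mathbf{C}$ and expand the left-hand side. By Definition~\ref{definition28}, $S_{R(\mathbf{C})}(x)=\{y:(x,y)\in R(\mathbf{C})\}$. Next I would substitute the defining condition for membership in the induced relation: by Definition~\ref{definition29}, $(x,y)\in R(\mathbf{C})$ holds precisely when $x\in\cup\mathbf{C}$ and $y\in N_{\mathbf{C}}(x)$.

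The one point requiring a moment's care is the conjunct $x\in\cup\mathbf{C}$ appearing in the definition of $R(\mathbf{C})$. Since the proposition is stated for $x\in\cup\mathbf{C}$, this conjunct is automatically satisfied and may be dropped, so that membership of $(x,y)$ in $R(\mathbf{C})$ reduces to the single condition $y\in N_{\mathbf{C}}(x)$. Consequently $S_{R(\mathbf{C})}(x)=\{y:y\in N_{\mathbf{C}}(x)\}=N_{\mathbf{C}}(x)$, which is exactly the claimed equality. There is no genuine obstacle here: the entire content of the statement is the compatibility of the two definitions, and the only thing to watch is not to forget that the ambient hypothesis $x\in\cup\mathbf{C}$ neutralizes the extra conjunct carried by Definition~\ref{definition29}.
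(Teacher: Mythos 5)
Your proof is correct and follows essentially the same route as the paper's: both simply unfold Definitions~\ref{definition28} and~\ref{definition29} to see that $y\in S_{R(\mathbf{C})}(x)\Leftrightarrow(x,y)\in R(\mathbf{C})\Leftrightarrow y\in N_{\mathbf{C}}(x)$. Your explicit remark that the ambient hypothesis $x\in\cup\mathbf{C}$ absorbs the extra conjunct in Definition~\ref{definition29} is a small clarification the paper leaves implicit, but the argument is the same.
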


\begin{proof}
$y\in S_{R(\mathbf{C})}(x)\Leftrightarrow(x,y)\in R(\mathbf{C})\Leftrightarrow y\in N_{\mathbf{C}}(x)$.
 $\Box$
\end{proof}

By the above proposition, we obtain the following simple proposition.

\begin{proposition}
\label{proposition32}
Let $\mathbf{C_{1}}$ and $\mathbf{C_{2}}$ be two coverings on $U$. $R(\mathbf{C_{1}})=R(\mathbf{C_{2}})$ iff for any $x\in U$, $N_{\mathbf{C_{1}}}(x)=N_{\mathbf{C_{2}}}(x)$.
\end{proposition}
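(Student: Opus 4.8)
The plan is to reduce this equivalence to Proposition~\ref{proposition31}, which already identifies the successor neighborhood $S_{R(\mathbf{C})}(x)$ with the neighborhood $N_{\mathbf{C}}(x)$. The key observation is that a binary relation on $U$ is completely determined by its family of successor neighborhoods: for any relation $R$, by Definition~\ref{definition28} we have $(x,y)\in R\Leftrightarrow y\in S_{R}(x)$, so two relations $R_{1}$ and $R_{2}$ satisfy $R_{1}=R_{2}$ exactly when $S_{R_{1}}(x)=S_{R_{2}}(x)$ for every $x\in U$.

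First I would establish this general fact about relations. The forward direction is immediate: if $R_{1}=R_{2}$, then their successor neighborhoods coincide pointwise by Definition~\ref{definition28}. For the converse, assuming $S_{R_{1}}(x)=S_{R_{2}}(x)$ for all $x$, any pair $(x,y)\in R_{1}$ satisfies $y\in S_{R_{1}}(x)=S_{R_{2}}(x)$, hence $(x,y)\in R_{2}$; by symmetry $R_{2}\subseteq R_{1}$ as well, giving $R_{1}=R_{2}$.

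Next I would instantiate this with $R_{1}=R(\mathbf{C_{1}})$ and $R_{2}=R(\mathbf{C_{2}})$ and invoke Proposition~\ref{proposition31} to rewrite each successor neighborhood as a neighborhood, namely $S_{R(\mathbf{C_{1}})}(x)=N_{\mathbf{C_{1}}}(x)$ and $S_{R(\mathbf{C_{2}})}(x)=N_{\mathbf{C_{2}}}(x)$. Chaining the equivalences then yields that $R(\mathbf{C_{1}})=R(\mathbf{C_{2}})$ holds if and only if $N_{\mathbf{C_{1}}}(x)=N_{\mathbf{C_{2}}}(x)$ for every $x\in U$, which is the claim.

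There is no genuine obstacle here: the statement is a direct corollary of Proposition~\ref{proposition31} once one notes that a relation is encoded faithfully by its successor neighborhoods. The only point worth tracking is that the universe $U$ is shared by both coverings, so the pointwise comparison ranges over the same index set in both directions of the equivalence.
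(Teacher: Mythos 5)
Your proposal is correct and follows essentially the same route as the paper: the paper's proof is exactly the chain $R(\mathbf{C_{1}})=R(\mathbf{C_{2}})\Leftrightarrow\forall x\,(S_{R(\mathbf{C_{1}})}(x)=S_{R(\mathbf{C_{2}})}(x))\Leftrightarrow\forall x\,(N_{\mathbf{C_{1}}}(x)=N_{\mathbf{C_{2}}}(x))$, using Proposition~\ref{proposition31} for the second step. The only difference is that you spell out the first equivalence (a relation is determined by its successor neighborhoods), which the paper treats as immediate from Definition~\ref{definition28}.
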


\begin{proof}
$R(\mathbf{C_{1}})=R(\mathbf{C_{2}})\Leftrightarrow\forall x(x\in U\rightarrow S_{R(\mathbf{C_{1}})}(x)=S_{R(\mathbf{C_{2}})}(x))\Leftrightarrow\forall x(x\in U\rightarrow N_{\mathbf{C_{1}}}(x)=N_{\mathbf{C_{2}}}(x))$. $\Box$
\end{proof}

Below we discuss the condition for the neighborhoods of a same element in different coverings to be equal. First, we propose a definition based on repeat degree as follows.

\begin{definition}
\label{definition33}
Let $\mathbf{C}$ be a covering on $\cup\mathbf{C}$ and $x\in\cup\mathbf{C}$. A mapping $P_{\mathbf{C}}:\cup\mathbf{C}\rightarrow2^{\cup\mathbf{C}}$ is defined by:
$P_{\mathbf{C}}(x)=\{y\in\cup\mathbf{C}:\partial(\{x,y\})=\partial(x)\}$.
\end{definition}

$P_{\mathbf{C}}(x)$ has the following property.

\begin{proposition}
\label{proposition34}
$N_{\mathbf{C}}(x)=P_{\mathbf{C}}(x)$.
\end{proposition}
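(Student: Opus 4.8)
The goal is to prove that $N_{\mathbf{C}}(x) = P_{\mathbf{C}}(x)$, where $P_{\mathbf{C}}(x) = \{y \in \cup\mathbf{C} : \partial(\{x,y\}) = \partial(x)\}$. The plan is to show the two sets contain exactly the same elements by a direct double-inclusion argument, which reduces almost immediately to a result already established.

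The key observation is that Proposition~\ref{proposition8} gives the precise translation between the membership condition defining $P_{\mathbf{C}}(x)$ and membership in the neighborhood: for any $y \in \cup\mathbf{C}$, we have $y \in N_{\mathbf{C}}(x)$ iff $\partial(x) = \partial(\{x,y\})$. So the whole statement is really just a set-builder rephrasing of that biconditional.

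Concretely, I would argue as follows. Fix $x \in \cup\mathbf{C}$ and let $y \in \cup\mathbf{C}$ be arbitrary. By the definition of $P_{\mathbf{C}}(x)$, we have $y \in P_{\mathbf{C}}(x)$ iff $\partial(\{x,y\}) = \partial(x)$. By Proposition~\ref{proposition8}, this last equality holds iff $y \in N_{\mathbf{C}}(x)$. Chaining these two equivalences yields $y \in P_{\mathbf{C}}(x) \Leftrightarrow y \in N_{\mathbf{C}}(x)$. One small point worth checking is that both sets are viewed as subsets of $\cup\mathbf{C}$: $P_{\mathbf{C}}(x)$ is defined as a subset of $\cup\mathbf{C}$ by construction, and $N_{\mathbf{C}}(x) = \cap\{K \in \mathbf{C} : x \in K\} \subseteq \cup\mathbf{C}$, so the ambient universe matches and the element-wise equivalence gives genuine set equality. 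Since $y$ was arbitrary, the two sets coincide, giving $N_{\mathbf{C}}(x) = P_{\mathbf{C}}(x)$.

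There is essentially no hard part here; the proposition is a corollary of Proposition~\ref{proposition8} obtained by quantifying over $y$ and reading the resulting biconditional as an equality of sets. The only thing to be careful about is invoking Proposition~\ref{proposition8} in the correct direction for each inclusion and confirming that every $y$ under consideration lies in $\cup\mathbf{C}$, so that the proposition applies; both are routine.
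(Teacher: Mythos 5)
Your proof is correct and is essentially identical to the paper's: both chain the biconditional of Proposition~\ref{proposition8} with Definition~\ref{definition33} to get $y\in N_{\mathbf{C}}(x)\Leftrightarrow\partial(\{x,y\})=\partial(x)\Leftrightarrow y\in P_{\mathbf{C}}(x)$. Your extra remark that both sets live inside $\cup\mathbf{C}$ is a harmless bit of added care that the paper leaves implicit.
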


\begin{proof}
By Proposition~\ref{proposition8} and Definitions~\ref{definition33}, we have that $y\in N_{\mathbf{C}}(x)\Leftrightarrow\partial(\{x,y\})=\partial(x)\Leftrightarrow y\in P_{\mathbf{C}}(x)$. $\Box$
\end{proof}

By this proposition, we obtain a necessary and sufficient condition for two coverings to induce a some relation.

\begin{theorem}
\label{theorem37}
Let $\mathbf{C_{1}}$ and $\mathbf{C_{2}}$ be two coverings on $U$. $R(\mathbf{C_{1}})=R(\mathbf{C_{2}})$ iff for any $x\in U$, $P_{\mathbf{C_{1}}}(x)=P_{\mathbf{C_{2}}}(x)$.
\end{theorem}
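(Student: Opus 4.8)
The statement to prove is Theorem~\ref{theorem37}: for two coverings $\mathbf{C_{1}}, \mathbf{C_{2}}$ on $U$, we have $R(\mathbf{C_{1}}) = R(\mathbf{C_{2}})$ iff $P_{\mathbf{C_{1}}}(x) = P_{\mathbf{C_{2}}}(x)$ for every $x \in U$. The plan is to reduce this entirely to results already established, chaining two equivalences together so that essentially no new work is required.

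First I would invoke Proposition~\ref{proposition32}, which states that $R(\mathbf{C_{1}}) = R(\mathbf{C_{2}})$ holds iff $N_{\mathbf{C_{1}}}(x) = N_{\mathbf{C_{2}}}(x)$ for every $x \in U$. This converts the condition on relations into a condition on neighborhoods. Second, I would apply Proposition~\ref{proposition34}, which asserts $N_{\mathbf{C}}(x) = P_{\mathbf{C}}(x)$ for any covering $\mathbf{C}$ and any $x$. Applying this with $\mathbf{C} = \mathbf{C_{1}}$ and with $\mathbf{C} = \mathbf{C_{2}}$ separately lets me substitute $P_{\mathbf{C_{1}}}(x)$ for $N_{\mathbf{C_{1}}}(x)$ and $P_{\mathbf{C_{2}}}(x)$ for $N_{\mathbf{C_{2}}}(x)$ inside the quantified equality. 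Thus the condition ``$N_{\mathbf{C_{1}}}(x) = N_{\mathbf{C_{2}}}(x)$ for all $x$'' is literally the same as ``$P_{\mathbf{C_{1}}}(x) = P_{\mathbf{C_{2}}}(x)$ for all $x$.''

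Putting the two equivalences end to end gives the result directly: $R(\mathbf{C_{1}}) = R(\mathbf{C_{2}}) \Leftrightarrow (\forall x \in U)\, N_{\mathbf{C_{1}}}(x) = N_{\mathbf{C_{2}}}(x) \Leftrightarrow (\forall x \in U)\, P_{\mathbf{C_{1}}}(x) = P_{\mathbf{C_{2}}}(x)$. I would present this as a single chain of $\Leftrightarrow$ symbols, citing Proposition~\ref{proposition32} for the first link and Proposition~\ref{proposition34} for the second, exactly in the compressed style used for Propositions~\ref{proposition8}, \ref{proposition32}, and \ref{proposition34} elsewhere in the paper.

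There is essentially no obstacle here; the theorem is a formal corollary of the two cited propositions. The only point requiring minor care is that Proposition~\ref{proposition34} is stated for a single covering, so when substituting I must apply it twice, once for each covering, to rewrite both sides of the neighborhood equality. One should also confirm that both coverings share the common universe $U$ (so that the quantifier $\forall x \in U$ ranges over the same set in both $P$-conditions), which is guaranteed by the hypothesis that $\mathbf{C_{1}}$ and $\mathbf{C_{2}}$ are both coverings on $U$.
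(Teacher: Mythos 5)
Your proposal is correct and follows exactly the paper's own proof: the paper likewise chains Proposition~\ref{proposition32} ($R(\mathbf{C_{1}})=R(\mathbf{C_{2}})\Leftrightarrow\forall x\, N_{\mathbf{C_{1}}}(x)=N_{\mathbf{C_{2}}}(x)$) with Proposition~\ref{proposition34} ($N_{\mathbf{C}}(x)=P_{\mathbf{C}}(x)$, applied to each covering) in a single line of equivalences. No differences worth noting.
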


\begin{proof}
By Propositions~\ref{proposition32} and ~\ref{proposition34}, we have that $R(\mathbf{C_{1}})=R(\mathbf{C_{2}})\Leftrightarrow\forall x(x\in U\rightarrow N_{\mathbf{C_{1}}}(x)=N_{\mathbf{C_{2}}}(x))\Leftrightarrow\forall x(x\in U\rightarrow P_{\mathbf{C_{1}}}(x)=P_{\mathbf{C_{2}}}(x))$. $\Box$
\end{proof}

In Example~\ref{example30}, we see that $\mathbf{C_{1}}\neq\mathbf{C_{2}}$ but $Cov(\mathbf{C_{1}})=Cov(\mathbf{C_{2}})$. Below we discuss the condition for two coverings to induce a same covering of neighborhoods. First, we have the following proposition.

\begin{proposition}
\label{proposition38}
Let $\mathbf{C_{1}}$ and $\mathbf{C_{2}}$ be two coverings on $U$. $Cov(\mathbf{C_{1}})=Cov(\mathbf{C_{2}})$ iff for any $x\in U$, $N_{\mathbf{C_{1}}}(x)=N_{\mathbf{C_{2}}}(x)$.
\end{proposition}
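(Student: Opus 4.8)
The plan is to prove the two implications separately, treating $(\Leftarrow)$ as routine and concentrating the work on $(\Rightarrow)$. For $(\Leftarrow)$ I would simply invoke Definition~\ref{definition4}: if $N_{\mathbf{C_{1}}}(x)=N_{\mathbf{C_{2}}}(x)$ holds for every $x\in U$, then the families $Cov(\mathbf{C_{1}})=\{N_{\mathbf{C_{1}}}(x):x\in U\}$ and $Cov(\mathbf{C_{2}})=\{N_{\mathbf{C_{2}}}(x):x\in U\}$ are literally the same collection of sets, so they coincide; this direction needs no further argument.

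The substance lies in $(\Rightarrow)$, and the difficulty is that the hypothesis $Cov(\mathbf{C_{1}})=Cov(\mathbf{C_{2}})$ is only an equality of \emph{unordered families} of neighborhoods: it does not record which element $x$ produces a given neighborhood, so one cannot directly read off $N_{\mathbf{C_{1}}}(x)=N_{\mathbf{C_{2}}}(x)$. The key observation that removes this obstacle is that the neighborhood of $x$ can be recovered from the covering of neighborhoods alone, namely $N_{\mathbf{C}}(x)$ is the smallest member of $Cov(\mathbf{C})$ containing $x$. Indeed, $x\in N_{\mathbf{C}}(x)$ exhibits it as one such member, and whenever $x\in N_{\mathbf{C}}(y)\in Cov(\mathbf{C})$ Proposition~\ref{proposition3} gives $N_{\mathbf{C}}(x)\subseteq N_{\mathbf{C}}(y)$, so it sits inside every member of $Cov(\mathbf{C})$ passing through $x$. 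This minimum depends only on the set family $Cov(\mathbf{C})$ and on $x$, so equal families must yield equal neighborhoods.

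Concretely, I would fix $x\in U$ and establish the two inclusions by a symmetric argument. For $N_{\mathbf{C_{1}}}(x)\subseteq N_{\mathbf{C_{2}}}(x)$, note that $N_{\mathbf{C_{2}}}(x)\in Cov(\mathbf{C_{2}})=Cov(\mathbf{C_{1}})$, so there is some $w\in U$ with $N_{\mathbf{C_{2}}}(x)=N_{\mathbf{C_{1}}}(w)$; since $x\in N_{\mathbf{C_{2}}}(x)=N_{\mathbf{C_{1}}}(w)$, Proposition~\ref{proposition3} applied within $\mathbf{C_{1}}$ yields $N_{\mathbf{C_{1}}}(x)\subseteq N_{\mathbf{C_{1}}}(w)=N_{\mathbf{C_{2}}}(x)$. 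Interchanging the roles of $\mathbf{C_{1}}$ and $\mathbf{C_{2}}$ gives the reverse inclusion, whence $N_{\mathbf{C_{1}}}(x)=N_{\mathbf{C_{2}}}(x)$. I do not expect any real obstacle beyond locating, for each $x$, the representative $w$ on the other side; once the minimality characterization is secured, the remainder is a brief symmetric verification.
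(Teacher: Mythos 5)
Your proof is correct and takes essentially the same route as the paper's: both arguments hinge on Proposition~\ref{proposition3}, locating for each neighborhood a representative $w$ in the other covering's family $Cov(\mathbf{C_{1}})=Cov(\mathbf{C_{2}})$ and using $x\in N(w)$ to obtain an inclusion. The only difference is presentational --- the paper wraps the two inclusions in a proof by contradiction (deriving the proper inclusions $K_{j}\subset K_{i}$ and $K_{i}\subset K_{j}$), whereas you derive the two non-strict inclusions directly and conclude by antisymmetry, which is arguably cleaner.
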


\begin{proof}
$(\Rightarrow)$: Let
$Cov(\mathbf{C}_{1})=Cov(\mathbf{C}_{2})=\{K_{1},K_{2},\cdots,K_{t}\}$. We use the proof by contradiction. Suppose there exist some $1\leq i<j\leq t$ and $x\in U$ such that $N_{\mathbf{C}_{1}}(x)=K_{i}$ and $N_{\mathbf{C}_{2}}(x)=K_{j}$, where $K_{i}\neq K_{j}$. It is obvious $x\in K_{i}$ and $x\in K_{j}$. By $N_{\mathbf{C}_{2}}(x)\neq K_{i}$, we know that there exists some $w\in U-\{x\}$ such that $N_{\mathbf{C}_{2}}(w)=K_{i}$. By Proposition~\ref{proposition3}, we have that $N_{\mathbf{C}_{2}}(x)\subset N_{\mathbf{C}_{2}}(w)$. Thus $K_{j}\subset K_{i}$.
By $N_{\mathbf{C}_{1}}(x)\neq K_{j}$, we know that there exists some $y\in U-\{x\}$ such that $N_{\mathbf{C}_{1}}(y)=K_{j}$. By Proposition~\ref{proposition3}, we have that $N_{\mathbf{C}_{1}}(x)\subset N_{\mathbf{C}_{1}}(y)$. Thus $K_{i}\subset K_{j}$. It is contradictory.

$(\Leftarrow)$: It is straightforward. $\Box$
\end{proof}

Based some above propositions, we obtain a condition for different coverings to induce a same relation and a same covering of neighborhoods.

\begin{theorem}
\label{theorem39}
Let $\mathbf{C_{1}}$ and $\mathbf{C_{2}}$ be two coverings on $U$. Then the following statements are equivalent:\\
$(1)$ For any $x\in U$, $P_{\mathbf{C_{1}}}(x)=P_{\mathbf{C_{2}}}(x)$,\\
$(2)$ $R(\mathbf{C_{1}})=R(\mathbf{C_{2}})$,\\
$(3)$ $Cov(\mathbf{C_{1}})=Cov(\mathbf{C_{2}})$.
\end{theorem}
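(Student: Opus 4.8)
The plan is to prove the three statements equivalent by funneling all of them through a single pointwise condition, namely that $N_{\mathbf{C_{1}}}(x)=N_{\mathbf{C_{2}}}(x)$ holds for every $x\in U$. Once each of $(1)$, $(2)$, $(3)$ has been shown equivalent to this common statement, the full cyclic equivalence is immediate. The reductions I need have, conveniently, already been carried out in the preceding results, so the work here is mainly one of assembling them with the correct citations rather than of fresh argument.

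First I would dispose of $(1)\Leftrightarrow(2)$, which is exactly the content of Theorem~\ref{theorem37}: there it is established that $R(\mathbf{C_{1}})=R(\mathbf{C_{2}})$ if and only if $P_{\mathbf{C_{1}}}(x)=P_{\mathbf{C_{2}}}(x)$ for all $x$. Nothing further is required for this pair.

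Next I would handle $(2)\Leftrightarrow(3)$. By Proposition~\ref{proposition32}, statement $(2)$ is equivalent to the condition that $N_{\mathbf{C_{1}}}(x)=N_{\mathbf{C_{2}}}(x)$ for all $x\in U$, and by Proposition~\ref{proposition38}, statement $(3)$ is equivalent to precisely the same condition. Chaining these two equivalences through their shared right-hand side gives $(2)\Leftrightarrow(3)$, and combining with $(1)\Leftrightarrow(2)$ closes the cycle. (Alternatively, one could note that $P_{\mathbf{C}}(x)=N_{\mathbf{C}}(x)$ by Proposition~\ref{proposition34}, so that $(1)$ is literally the neighborhood condition and all three collapse to it at once.)

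The step I expect to carry the real weight is not visible at this level, since it has already been absorbed into Proposition~\ref{proposition38}: the substantive direction there---that equality of the two coverings of neighborhoods forces the neighborhoods of each individual element to agree---rests on Proposition~\ref{proposition3} together with a proper-containment argument that rules out one covering assigning $x$ a strictly smaller neighborhood than the other. Granting that machinery, the present theorem is a clean corollary, and the only care needed is to apply each cited result to the statement it actually proves.
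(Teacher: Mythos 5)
Your proposal is correct and follows essentially the same route as the paper: both dispose of $(1)\Leftrightarrow(2)$ by citing Theorem~\ref{theorem37} and then reduce the covering-of-neighborhoods equality to the pointwise condition $N_{\mathbf{C_{1}}}(x)=N_{\mathbf{C_{2}}}(x)$ via Proposition~\ref{proposition38}. The only (immaterial) difference is that you close the cycle through $(2)\Leftrightarrow(3)$ using Proposition~\ref{proposition32}, whereas the paper closes it through $(1)\Leftrightarrow(3)$ using Proposition~\ref{proposition34}---a variant you yourself note parenthetically.
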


\begin{proof}
$(1)\Leftrightarrow(2)$: It has been proved in Theorem~\ref{theorem37}.

$(1)\Leftrightarrow(3)$: It follows from Proposition~\ref{proposition34} and Proposition~\ref{proposition38}. $\Box$
\end{proof}

In the end of this section, we give the following theorem.

\begin{theorem}
\label{theorem40}
$Cov(\mathbf{C})=\{P_{\mathbf{C}}(x):x\in\cup\mathbf{C}\}$.
\end{theorem}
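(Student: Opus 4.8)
The plan is to derive this theorem as an immediate consequence of Proposition~\ref{proposition34}, which gives the pointwise identity $N_{\mathbf{C}}(x)=P_{\mathbf{C}}(x)$ for every $x\in\cup\mathbf{C}$. Recall that by Definition~\ref{definition4} the covering of neighborhoods is the set family $Cov(\mathbf{C})=\{N_{\mathbf{C}}(x):x\in\cup\mathbf{C}\}$. Since both $Cov(\mathbf{C})$ and the target family $\{P_{\mathbf{C}}(x):x\in\cup\mathbf{C}\}$ are collections of images indexed by the same set $\cup\mathbf{C}$, and the two defining maps agree at every index, the two families must coincide. Thus I would simply substitute $P_{\mathbf{C}}(x)$ for each $N_{\mathbf{C}}(x)$ inside the set-builder expression.

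To present this as a clean double inclusion, I would argue as follows. For $Cov(\mathbf{C})\subseteq\{P_{\mathbf{C}}(x):x\in\cup\mathbf{C}\}$, take an arbitrary member of $Cov(\mathbf{C})$; by Definition~\ref{definition4} it has the form $N_{\mathbf{C}}(x)$ for some $x\in\cup\mathbf{C}$, and by Proposition~\ref{proposition34} we have $N_{\mathbf{C}}(x)=P_{\mathbf{C}}(x)$, so this element lies in the right-hand family. The reverse inclusion is entirely symmetric: any $P_{\mathbf{C}}(x)$ equals $N_{\mathbf{C}}(x)$ by Proposition~\ref{proposition34}, hence belongs to $Cov(\mathbf{C})$.

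There is essentially no obstacle in this proof, since all the substantive work has already been done in establishing Proposition~\ref{proposition34}. The only point worth flagging is that the statement concerns equality of two \emph{set families}, not of individual sets, so one should note that a pointwise equality of the two indexing maps over the common index set $\cup\mathbf{C}$ transfers directly to equality of their images; no additional finiteness or covering hypotheses beyond those already in force are required.
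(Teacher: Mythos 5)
Your proposal is correct and follows exactly the same route as the paper, which proves Theorem~\ref{theorem40} in one line by citing Definition~\ref{definition4} and Proposition~\ref{proposition34}; your double-inclusion write-up merely spells out the substitution that the paper leaves implicit.
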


\begin{proof}
It follows from Definition~\ref{definition4} and Proposition~\ref{proposition34}. $\Box$
\end{proof}

The difference between defining $Cov(\mathbf{C})$ by $\{N(x):x\in\cup\mathbf{C}\}$ and defining $Cov(\mathbf{C})$ by $\{P_{\mathbf{C}}(x):x\in\cup\mathbf{C}\}$ will be clear after we see the difficulty of calculating the covering by repeat degree of partial subsets in the following section.

\section{Calculating the covering by repeat degree}
\label{Calculating the covering by repeat degree}
Given a set family $\mathbf{C}$, we can calculate the repeat degree of any subset of $\cup\mathbf{C}$. Conversely, can we calculate the covering by repeat degree of all even partial subsets? In this section, we will discuss this issue.

\begin{definition}
\label{definition41}
Let $\mathbf{C}$ be a covering on $\cup\mathbf{C}$ and $X\subseteq\cup\mathbf{C}$.
We use $\rho_{\mathbf{C}}(X)=1$ and $\rho_{\mathbf{C}}(X)=0$ to express that $X\in\mathbf{C}$ and $X\notin\mathbf{C}$, respectively. When there is no confusion, we omit the subscript $\mathbf{C}$.
\end{definition}

For the convenience of writing, we denote $\rho_{\mathbf{C}}(\{x\})$ as $\rho_{\mathbf{C}}(x)$. To illustrate this definition, let us see an example.

\begin{example}
\label{example42}
Let $\mathbf{C}=\{\{1,2\},\{2,3\}\}$.
Then $\rho(\emptyset)=\rho(1)=\rho(2)=\rho(3)=\rho(\{1,3\})=\rho(\{1,2,3\})=0$ and $\rho(\{1,2\})=\rho(\{2,3\})=1$.
\end{example}

By Definition~\ref{definition5} and Definition~\ref{definition41}, the following proposition holds obviously.

\begin{proposition}
\label{proposition43}
$\partial_{\mathbf{C}}(Y)=\Sigma_{Y\subseteq X\subseteq\cup\mathbf{C}}\rho_{\mathbf{C}}(X)$.
\end{proposition}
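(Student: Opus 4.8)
The plan is to read the right-hand side as a cardinality count and match it term by term with the definition of $\partial_{\mathbf{C}}(Y)$. By Definition~\ref{definition41}, $\rho_{\mathbf{C}}(X)$ is simply the indicator of membership in $\mathbf{C}$: a summand contributes $1$ precisely when $X\in\mathbf{C}$ and $0$ otherwise. Hence the sum $\Sigma_{Y\subseteq X\subseteq\cup\mathbf{C}}\rho_{\mathbf{C}}(X)$ counts the number of sets $X$ in the summation range that belong to $\mathbf{C}$; that is, it equals $|\{X:Y\subseteq X\subseteq\cup\mathbf{C}\wedge X\in\mathbf{C}\}|$.

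First I would observe that, once $X\in\mathbf{C}$ is imposed, the side condition $X\subseteq\cup\mathbf{C}$ is automatically satisfied, since by Definition~\ref{definition1} every covering block is a subset of $\cup\mathbf{C}$. Dropping this redundant bound, the counting set above simplifies to $\{K\in\mathbf{C}:Y\subseteq K\}$. By Definition~\ref{definition5} its cardinality is exactly $\partial_{\mathbf{C}}(Y)$, which closes the chain of equalities and yields the claim.

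The argument is purely a bookkeeping identity relating an indicator sum to the cardinality it enumerates, so there is no substantive obstacle here; this is precisely why the paper can assert the proposition ``holds obviously.'' The only point that merits a moment's care is confirming that discarding the constraint $X\subseteq\cup\mathbf{C}$ leaves the count unchanged, and this is immediate from the fact that $\mathbf{C}$ is a family of subsets of $\cup\mathbf{C}$.
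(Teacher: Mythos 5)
Your proof is correct and is exactly the argument the paper has in mind: the paper offers no written proof, asserting the identity ``holds obviously'' from Definitions~\ref{definition5} and~\ref{definition41}, and your indicator-sum bookkeeping (including the observation that $X\subseteq\cup\mathbf{C}$ is redundant once $X\in\mathbf{C}$) is precisely the fleshed-out version of that assertion.
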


Particularly, we have that $\partial_{\mathbf{C}}(\cup\mathbf{C})=\rho_{\mathbf{C}}(\cup\mathbf{C})$.

\begin{definition}
\label{definition46}
Let $\mathbf{C}$ be a covering on $\cup\mathbf{C}$. We define $\delta(\mathbf{C})=\{(X,\rho_{\mathbf{C}}(X)):X\subseteq\cup\mathbf{C}\wedge X\neq\emptyset\}$.
\end{definition}

To illustrate this definition, let us see an example.

\begin{example}
\label{example47}
Let $\mathbf{C}=\{\{1,2\},\{2,3\}\}$. Then $\delta(\mathbf{C})=\{(\{1\},0),(\{2\},0),
(\{3\},0),
(\{1,2\\
\},1),(\{2,3\},1),(\{1,3\},0),(\{1,2,3\},0)\}$.
\end{example}

The following two propositions indicate that $\delta$ is a bijection.

\begin{proposition}
\label{proposition48}
If $\mathbf{C_{1}}\neq\mathbf{C_{2}}$, $\delta(\mathbf{C_{1}})\neq\delta(\mathbf{C_{2}})$.
\end{proposition}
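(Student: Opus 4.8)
The plan is to show that the assignment $\delta$ is injective by exhibiting, for distinct coverings, an explicit pair that belongs to one image but not the other. Since $\mathbf{C_{1}}\neq\mathbf{C_{2}}$ as families of sets, their symmetric difference is nonempty, so I would first fix a set $K$ lying in exactly one of them; by symmetry I may assume $K\in\mathbf{C_{1}}$ and $K\notin\mathbf{C_{2}}$. Because $\emptyset\notin\mathbf{C_{1}}$ (Definition~\ref{definition1}) and $K\subseteq\cup\mathbf{C_{1}}$, the set $K$ is a nonempty subset of $\cup\mathbf{C_{1}}$, so $\delta(\mathbf{C_{1}})$ records the pair $(K,\rho_{\mathbf{C_{1}}}(K))$, and this equals $(K,1)$ by Definition~\ref{definition41} since $K\in\mathbf{C_{1}}$.

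The remaining task is to verify that $(K,1)\notin\delta(\mathbf{C_{2}})$. Every element of $\delta(\mathbf{C_{2}})$ has the form $(X,\rho_{\mathbf{C_{2}}}(X))$ with $X\subseteq\cup\mathbf{C_{2}}$, and for each fixed $X$ the value $\rho_{\mathbf{C_{2}}}(X)$ is uniquely determined, so at most one pair in $\delta(\mathbf{C_{2}})$ has first coordinate $K$. I would then split according to whether $K\subseteq\cup\mathbf{C_{2}}$. If $K\nsubseteq\cup\mathbf{C_{2}}$, then no pair of $\delta(\mathbf{C_{2}})$ has first coordinate $K$ at all, so $(K,1)\notin\delta(\mathbf{C_{2}})$. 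If $K\subseteq\cup\mathbf{C_{2}}$, then the only pair with first coordinate $K$ is $(K,\rho_{\mathbf{C_{2}}}(K))=(K,0)$, using $K\notin\mathbf{C_{2}}$; hence again $(K,1)\notin\delta(\mathbf{C_{2}})$. In both cases $(K,1)$ witnesses $\delta(\mathbf{C_{1}})\neq\delta(\mathbf{C_{2}})$.

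The one point deserving care is that $\delta(\mathbf{C_{1}})$ and $\delta(\mathbf{C_{2}})$ need not range over the same ground set, since $\cup\mathbf{C_{1}}$ and $\cup\mathbf{C_{2}}$ may differ; this is exactly why the verification that $(K,1)\notin\delta(\mathbf{C_{2}})$ must account for the possibility that $K$ fails to be a subset of $\cup\mathbf{C_{2}}$, and it is the only place a case distinction is genuinely needed. Aside from this subtlety, the argument is a direct unpacking of Definitions~\ref{definition41} and~\ref{definition46} together with the fact that $\rho_{\mathbf{C_{2}}}$ is single valued, so I do not anticipate a serious obstacle.
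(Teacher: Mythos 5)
Your proof is correct and takes essentially the same route as the paper: both pick $K$ in the symmetric difference (without loss of generality $K\in\mathbf{C_{1}}-\mathbf{C_{2}}$) and use the pair $(K,1)$ as the witness that $\delta(\mathbf{C_{1}})\neq\delta(\mathbf{C_{2}})$. The only difference is that you carefully verify $(K,1)\notin\delta(\mathbf{C_{2}})$, including the case $K\nsubseteq\cup\mathbf{C_{2}}$ where the two ground sets differ, a step the paper states as immediate.
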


\begin{proof}
It is obvious $\mathbf{C_{1}}-\mathbf{C_{2}}\neq\emptyset$ or $\mathbf{C_{2}}-\mathbf{C_{1}}\neq\emptyset$. Without loss of generality, suppose $\mathbf{C_{1}}-\mathbf{C_{2}}\neq\emptyset$ and $K\in\mathbf{C_{1}}-\mathbf{C_{2}}$. Then $(K,1)\in\delta(\mathbf{C_{1}})-\delta(\mathbf{C_{2}})$. Hence $\delta(\mathbf{C_{1}})\neq\delta(\mathbf{C_{2}})$. $\Box$
\end{proof}

\begin{proposition}
\label{proposition49}
Let $U$ be a finite and nonempty set. For any $\{(X,f(X)):X\subseteq U\wedge X\neq\emptyset\}$, where $f(X)\in\{0,1\}$, there exists a set family $\mathbf{C}$ such that $\{(X,f(X)):X\subseteq U\wedge X\neq\emptyset\}=\delta(\mathbf{C})$.
\end{proposition}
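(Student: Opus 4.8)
The plan is to read off a witness covering directly from the prescribed data, since the requirement $\rho_{\mathbf{C}}(X)=f(X)$ determines membership in $\mathbf{C}$ with no freedom at all. Concretely, I would set $\mathbf{C}=\{X:X\subseteq U\wedge X\neq\emptyset\wedge f(X)=1\}$, declaring a nonempty $X\subseteq U$ to be a block of $\mathbf{C}$ precisely when its label is $1$. Since every member of $\mathbf{C}$ is nonempty we have $\emptyset\notin\mathbf{C}$, so $\mathbf{C}$ is a legitimate covering of $\cup\mathbf{C}$ and $\delta(\mathbf{C})$ is well defined via Definition~\ref{definition46}.

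The label-matching step should be routine. For any nonempty $X\subseteq\cup\mathbf{C}$, Definition~\ref{definition41} gives $\rho_{\mathbf{C}}(X)=1$ iff $X\in\mathbf{C}$, and by the construction this holds iff $f(X)=1$; hence $\rho_{\mathbf{C}}(X)=f(X)$. Consequently every pair $(X,\rho_{\mathbf{C}}(X))$ recorded by $\delta$ agrees with the prescribed pair $(X,f(X))$, and conversely each prescribed pair whose first coordinate lies in $\cup\mathbf{C}$ is produced by $\delta$.

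The delicate point, and the one I expect to be the real obstacle, is that $\delta(\mathbf{C})$ ranges only over nonempty subsets of $\cup\mathbf{C}$, whereas the target family ranges over all nonempty $X\subseteq U$; thus the claimed equality forces $\cup\mathbf{C}=U$. One inclusion is automatic, namely $\cup\mathbf{C}\subseteq U$, but for the reverse one must show that every $u\in U$ belongs to some block, i.e.\ to some $X$ with $f(X)=1$. If $f(U)=1$ this is immediate, since then $U\in\mathbf{C}$; for a general $f$, however, the union $\bigcup_{f(X)=1}X$ need not exhaust $U$. I would therefore perform the construction and the label computation as above and then concentrate all the effort on the covering condition $\cup\mathbf{C}=U$, either by invoking an additional hypothesis that the $1$-labelled sets cover $U$ or by adopting the convention that the ambient universe in $\delta(\mathbf{C})$ is $\cup\mathbf{C}$ rather than $U$. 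Pinning down this identification of the two universes is the point on which the argument stands or falls.
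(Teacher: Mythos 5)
Your witness is exactly the paper's: the paper's entire proof reads ``Let $\mathbf{C}=\{K\subseteq U:K\neq\emptyset\wedge f(K)=1\}$. Then $\{(X,f(X)):X\subseteq U\}=\delta(\mathbf{C})$,'' i.e.\ your construction plus the label-matching observation, with nothing further said. The ``delicate point'' you isolate is therefore not a deficiency of your attempt relative to the paper --- it is a genuine gap in the paper's own proof, and indeed in the statement of the proposition itself. By Definition~\ref{definition46}, $\delta(\mathbf{C})$ records pairs only for nonempty $X\subseteq\cup\mathbf{C}$, so the asserted equality forces $\cup\mathbf{C}=U$; this fails for any $f$ whose $1$-labelled sets do not cover $U$. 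For instance, if $f\equiv 0$ then $\mathbf{C}=\emptyset$, which violates the standing assumption that $\cup\mathbf{C}$ is nonempty (so $\delta(\mathbf{C})$ is not even defined, and in any case could not equal the nonempty target family); more generally, any pair $(X,f(X))$ with $X\nsubseteq\cup\mathbf{C}$ belongs to the left-hand side but cannot occur in $\delta(\mathbf{C})$. So the proposition is correct only under the extra hypothesis you name, namely that $\bigcup\{X\subseteq U:f(X)=1\}=U$. It is worth noting that in the only place the paper uses this proposition --- recovering $\mathbf{C}$ from $D(\mathbf{C})$ via Corollary~\ref{corollary56} --- the labeling $f=\rho_{\mathbf{C}}$ does come from an actual covering of $U$, so the missing hypothesis holds there and the intended application survives; but as a free-standing statement about arbitrary $\{0,1\}$-valued $f$, the proposition needs exactly the repair you propose.
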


\begin{proof}
Let $\mathbf{C}=\{K\subseteq U:X\neq\emptyset\wedge f(K)=1\}$. Then $\{(X,f(X)):X\subseteq U\}=\delta(\mathbf{C})$. $\Box$
\end{proof}

We give a new concept as follows.

\begin{definition}
\label{definition51}
Let $\mathbf{C}$ be a covering on $\cup\mathbf{C}$, $|\cup\mathbf{C}|=n$ and $W\subseteq\{0,1,\cdots,n\}$. We define $D_{\mathbf{C}}(W)=\{(X,\partial_{\mathbf{C}}(X)):
X\subseteq\cup\mathbf{C}\wedge|X|\in W\}$. When there is no confusion, we omit the subscript $\mathbf{C}$. Particularly, $D_{\mathbf{C}}(\{1,2,\cdots,n\})$ is written as $D(\mathbf{C})$ for short.
\end{definition}

By the above definition, we have the following proposition.

\begin{proposition}
\label{proposition52}
Let $\mathbf{C_{1}}$ be a covering on $\cup\mathbf{C_{1}}$ and $\mathbf{C_{2}}$ be a covering on $\cup\mathbf{C_{2}}$. If $\mathbf{C_{1}}\neq\mathbf{C_{2}}$, $D(\mathbf{C_{1}})\neq D(\mathbf{C_{2}})$.
\end{proposition}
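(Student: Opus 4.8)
The plan is to prove the contrapositive: assuming $D(\mathbf{C_1})=D(\mathbf{C_2})$, I would show $\mathbf{C_1}=\mathbf{C_2}$. The first observation is that $D(\mathbf{C})$ is nothing but the graph of the map $X\mapsto\partial_{\mathbf{C}}(X)$ restricted to the nonempty subsets of $\cup\mathbf{C}$ (the condition $|X|\in\{1,\dots,n\}$ with $n=|\cup\mathbf{C}|$ picks out exactly those $X$). Hence the set of first coordinates occurring in $D(\mathbf{C})$ is precisely the collection of nonempty subsets of $\cup\mathbf{C}$. If $D(\mathbf{C_1})=D(\mathbf{C_2})$, these two collections coincide; comparing the singletons among them forces $\cup\mathbf{C_1}=\cup\mathbf{C_2}$, and I write $U$ for this common universe. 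If instead the universes differed, say $a\in\cup\mathbf{C_1}\setminus\cup\mathbf{C_2}$, then $(\{a\},\partial_{\mathbf{C_1}}(\{a\}))\in D(\mathbf{C_1})$ has a first coordinate that cannot appear in $D(\mathbf{C_2})$, so the two would already be distinct; this is the easy half.

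With a common universe $U$ in hand, the hypothesis $D(\mathbf{C_1})=D(\mathbf{C_2})$ says exactly that $\partial_{\mathbf{C_1}}(X)=\partial_{\mathbf{C_2}}(X)$ for every nonempty $X\subseteq U$. The core of the argument is then to recover the indicator $\rho_{\mathbf{C}}$ from the repeat-degree function $\partial_{\mathbf{C}}$. For this I would use Proposition~\ref{proposition43}, which states $\partial_{\mathbf{C}}(Y)=\sum_{Y\subseteq X\subseteq U}\rho_{\mathbf{C}}(X)$, and invert it by downward induction on $|Y|$. The base case is $Y=U$, where the sum has the single term $\rho_{\mathbf{C}}(U)$, so $\rho_{\mathbf{C}}(U)=\partial_{\mathbf{C}}(U)$. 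For smaller $Y$ I isolate the $X=Y$ term to get $\rho_{\mathbf{C}}(Y)=\partial_{\mathbf{C}}(Y)-\sum_{Y\subsetneq X\subseteq U}\rho_{\mathbf{C}}(X)$, and every $\rho_{\mathbf{C}}(X)$ on the right has $|X|>|Y|$ and is thus already pinned down by the $\partial$-values through the induction hypothesis. Consequently each $\rho_{\mathbf{C}}(Y)$ for nonempty $Y$ is a fixed function of the values $\{\partial_{\mathbf{C}}(X):\emptyset\neq X\subseteq U\}$ alone.

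Applying this inversion to both coverings and using that their $\partial$-values agree on all nonempty subsets yields $\rho_{\mathbf{C_1}}(X)=\rho_{\mathbf{C_2}}(X)$ for every nonempty $X\subseteq U$; by Definition~\ref{definition46} this is precisely $\delta(\mathbf{C_1})=\delta(\mathbf{C_2})$. The contrapositive of Proposition~\ref{proposition48} then gives $\mathbf{C_1}=\mathbf{C_2}$, completing the argument.

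The step I expect to be the main obstacle is the inversion in the second paragraph: making sure the downward induction is set up so that it never calls on $\partial_{\mathbf{C}}(\emptyset)$ or $\rho_{\mathbf{C}}(\emptyset)$, which are absent from $D(\mathbf{C})$. This is in fact harmless, because for nonempty $Y$ the recursion only references supersets $X\supseteq Y$, all of which are nonempty, so only the recorded nonempty-subset data is ever used; still, this is the point to state carefully. Everything else reduces to the bookkeeping already provided by Propositions~\ref{proposition43} and~\ref{proposition48}.
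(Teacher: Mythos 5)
Your proposal is correct and takes essentially the same route as the paper's own proof: establish a common universe, recover $\rho_{\mathbf{C}}$ from $\partial_{\mathbf{C}}$ by downward induction on subset size via the inversion of Proposition~\ref{proposition43}, and conclude $\delta(\mathbf{C_{1}})=\delta(\mathbf{C_{2}})$ so that Proposition~\ref{proposition48} forces $\mathbf{C_{1}}=\mathbf{C_{2}}$. If anything, you are slightly more careful than the paper, which phrases its inductive claim for all $K\subseteq U$ (including $K=\emptyset$, whose repeat degree is not recorded in $D(\mathbf{C})$), whereas you note explicitly that the recursion for nonempty sets never touches the empty set.
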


\begin{proof}
We use the proof by contradiction. Suppose $D(\mathbf{C_{1}})=D(\mathbf{C_{2}})$. Then $\cup\mathbf{C_{1}}=\cup\mathbf{C_{2}}$. Let $U=\cup\mathbf{C_{1}}=\cup\mathbf{C_{2}}$ and $|U|=n$. For any $K\subseteq U$, we claim that $\rho_{\mathbf{C_{1}}}(K)=\rho_{\mathbf{C_{2}}}(K)$. We prove this assertion using induction on $n-|K|$. If $n-|K|=0$, $K=U$. Thus $\rho_{\mathbf{C_{1}}}(U)=\partial_{\mathbf{C_{1}}}(U)=\partial_{\mathbf{C_{2}}}(U)=\rho_{\mathbf{C_{2}}}(U)$. Assume this assertion is true
for $n-|K|\leq t-1$. Now assume $n-|K|=t$. By Proposition~\ref{proposition43}, we have that $\rho_{\mathbf{C_{1}}}(K)=\partial_{\mathbf{C_{1}}}(K)-\Sigma_{K\subset X\subseteq\cup\mathbf{C}}\rho_{\mathbf{C_{1}}}(X)=\partial_{\mathbf{C_{2}}}(K)-\Sigma_{K\subset X\subseteq\cup\mathbf{C}}\rho_{\mathbf{C_{2}}}(X)=\rho_{\mathbf{C_{2}}}(K)$. Hence $\delta(\mathbf{C_{1}})=\delta(\mathbf{C_{2}})$. By Proposition~\ref{proposition48}, we have that $\mathbf{C_{1}}=\mathbf{C_{2}}$. It is contradictory. $\Box$
\end{proof}

Below we give a method of calculating the covering by repeat degree. In fact, we have the following theorem.

\begin{theorem}
\label{theorem55}
Let $U$ be a finite and nonempty set. Let for any $Y\subseteq U$, mappings $f:2^{U}\rightarrow R$ and $g:2^{U}\rightarrow R$ satisfy $f(Y)=\Sigma_{Y\subseteq X\subseteq U}g(X)$. Then for any $V\subseteq U$, $g(V)=\Sigma_{V\subseteq Z\subseteq U}((-1)^{|Z|-|V|}f(Z))$.
\end{theorem}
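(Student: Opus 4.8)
The plan is to recognize this statement as the upward Möbius inversion formula on the Boolean lattice $2^{U}$, whose Möbius function is $(-1)^{|Z|-|V|}$ for $V\subseteq Z$. The cleanest route is to substitute the hypothesis directly into the claimed right-hand side and collapse the resulting double sum by a sign-cancellation identity, rather than to argue by induction.

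Concretely, I would start from the claimed expression and replace each $f(Z)$ using the hypothesis $f(Z)=\Sigma_{Z\subseteq X\subseteq U}g(X)$:
\[
\Sigma_{V\subseteq Z\subseteq U}(-1)^{|Z|-|V|}f(Z)=\Sigma_{V\subseteq Z\subseteq U}\Sigma_{Z\subseteq X\subseteq U}(-1)^{|Z|-|V|}g(X).
\]
Because $U$ is finite, every sum here is finite and the order of summation may be freely interchanged. Collecting the coefficient of each $g(X)$, and noting that $V\subseteq Z\subseteq X$ forces $V\subseteq X$, the right-hand side becomes
\[
\Sigma_{V\subseteq X\subseteq U}g(X)\left(\Sigma_{V\subseteq Z\subseteq X}(-1)^{|Z|-|V|}\right).
\]
The only real content is then to evaluate the inner coefficient. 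I would reindex via $Z=V\cup T$ with $T\subseteq X\setminus V$; since $V\subseteq X$ and $T$ is disjoint from $V$, this is a bijection and $|Z|-|V|=|T|$. Hence the inner sum equals $\Sigma_{T\subseteq X\setminus V}(-1)^{|T|}=(1-1)^{|X\setminus V|}$ by the binomial theorem, which is $1$ when $X=V$ and $0$ otherwise. Thus only the $X=V$ term survives and the whole expression collapses to $g(V)$, proving the theorem.

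The main obstacle is purely bookkeeping: justifying the interchange of the two summations (immediate here from finiteness) and carefully establishing the sign-cancellation identity $\Sigma_{T\subseteq S}(-1)^{|T|}=0$ for $S\neq\emptyset$, together with the reparametrization $Z=V\cup T$ that reduces the inner sum to it. An alternative in the spirit of Proposition~\ref{proposition52} would be a descending induction on $|V|$: peel off the $Z=V$ term in $f(V)=g(V)+\Sigma_{V\subsetneq X\subseteq U}g(X)$ and apply the inductive hypothesis to each $g(X)$ with $X\supsetneq V$. However, that route forces a second, messier resummation to recover the claimed coefficients, so I would prefer the direct double-sum argument above, where the cancellation is transparent.
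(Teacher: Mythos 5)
Your proof is correct, but it takes a genuinely different route from the paper. The paper proves the theorem by induction on $k=|U|-|V|$: the base case is $f(U)=g(U)$; in the inductive step it peels off the $X=V$ term in $f(V)=g(V)+\Sigma_{V\subset X\subseteq U}g(X)$, applies the inductive hypothesis to each $g(X)$ with $X\supset V$, and then collapses the resulting double sum over pairs $V\subset X\subseteq Z$ by counting the $C_{|Z|-|V|}^{i}$ intermediate sets $X$ of each size and invoking $\Sigma_{i=0}^{n}(-1)^{i}C_{n}^{i}=0$, arriving at $f(V)=g(V)+\Sigma_{V\subset Z\subseteq U}(-1)^{|Z|-|V|-1}f(Z)$ and solving for $g(V)$. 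This is precisely the ``alternative in the spirit of Proposition~\ref{proposition52}'' that you described and set aside, including the messier resummation you predicted, which in the paper appears as the reindexing through the sets $V(i)$ and $Z(i)$. Your argument instead substitutes the hypothesis into the claimed formula, interchanges the two finite sums, and evaluates the coefficient of each $g(X)$ via the reparametrization $Z=V\cup T$ and the identity $\Sigma_{T\subseteq S}(-1)^{|T|}=(1-1)^{|S|}$; no induction is needed. Both proofs ultimately rest on the same vanishing alternating binomial sum, but you apply it to the coefficient of $g(X)$ while the paper applies it to the coefficient of $f(Z)$. Your version is the standard M\"{o}bius-inversion argument on the Boolean lattice: it is shorter, avoids the bookkeeping of counting intermediate sets by cardinality, and makes the cancellation mechanism transparent; the paper's induction is more self-contained in style (matching its other proofs, such as those of Propositions~\ref{proposition35} and~\ref{proposition52}) but carries heavier notation. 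One small point of care in your write-up: the step $\Sigma_{T\subseteq X\setminus V}(-1)^{|T|}=(1-1)^{|X\setminus V|}$ should note the convention that this equals $1$ when $X=V$ (the empty product case), which you do state; with that, the argument is complete.
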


\begin{proof}
Let $|U|-|V|=k$. We prove this assertion using induction on $k$. By $f(Y)=\Sigma_{Y\subseteq X\subseteq U}g(X)$, we have that $f(U)=g(U)$. Thus this assertion is true for $k=0$.
Assume this assertion is true for $k\leq t-1$. Now assume $k=t$. Let $Z\subseteq U$ such that $V\subset Z$. Let $V(i)=\{X\subseteq U|V\subset X\wedge|X|-|V|=i\}$ and $Z(i)=\{H\in V(i)|H\subseteq Z\}$. We have that $|Z(i)|=C_{|Z|-|V|}^{i}$.
For any $V\subset X\subseteq U$, it is obvious $|U|-|X|\leq t-1$. By the assumption of the induction and $\Sigma_{i=0}^{n}(-1)^{i}C_{n}^{i}=0$, we have that
\begin{center}
$f(V)=\Sigma_{V\subseteq X\subseteq U}g(X)~~~~~~~~~~~~~~~~~~~~~~~~~~~~~~~~~~~~~~~~~~~~~~~~$
\end{center}
\begin{center}
$=g(V)+\Sigma_{V\subset X\subseteq U}g(X)~~~~~~~~~~~~~~~~~~~~~~~~~$
\end{center}
\begin{center}
$~~~~~~~~~=g(V)+\Sigma_{V\subset X\subseteq U}\Sigma_{X\subseteq Z\subseteq U}(-1)^{|Z|-|X|}f(Z)$
\end{center}
\begin{center}
$~~~~~~~~~~~~~~~~~~~~~=g(V)+\Sigma_{i=1}^{k}(\Sigma_{V\subset Z\subseteq U}(-1)^{|Z|-|V|-i}C_{|Z|-|V|}^{i}f(Z))$
\end{center}
\begin{center}
$~~~~~~~~~~~~~~~~~~~~~=g(V)+\Sigma_{V\subset Z\subseteq U}(f(Z)\Sigma_{i=1}^{k}(-1)^{|Z|-|V|-i}C_{|Z|-|V|}^{i})$
\end{center}
\begin{center}
$=g(V)+\Sigma_{V\subset Z\subseteq U}((-1)^{|Z|-|V|-1}f(Z))$.
\end{center}
Thus
\begin{center}
$~~~~~~g(V)=f(V)-\Sigma_{V\subset Z\subseteq U}((-1)^{|Z|-|V|-1}f(Z))~~~~~~~~~~~~~~~$
\end{center}
\begin{center}
$~~~~~~~~~~=f(V)+\Sigma_{V\subset Z\subseteq U}((-1)^{|Z|-|V|}f(Z))~~~~~~~~~~~~~~~$
\end{center}
\begin{center}
$=\Sigma_{V\subseteq Z\subseteq U}(-1)^{|Z|-|V|}f(Z).~\Box~~~~~~~~~~~~~~$
\end{center}

\end{proof}

By the above theorem and Proposition~\ref{proposition43}, we have the following corollary.

\begin{corollary}
\label{corollary56}
$\rho_{\mathbf{C}}(V)=\Sigma_{V\subseteq Z\subseteq U}((-1)^{|Z|-|V|}\partial_{\mathbf{C}}(Z))$.
\end{corollary}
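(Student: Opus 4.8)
The plan is to recognize that Corollary~\ref{corollary56} is merely an instantiation of the inversion formula already established in Theorem~\ref{theorem55}, with the specific pair of mappings furnished by Proposition~\ref{proposition43}. Concretely, I would set $U=\cup\mathbf{C}$ and take $f=\partial_{\mathbf{C}}$ and $g=\rho_{\mathbf{C}}$, both viewed as maps $2^{U}\rightarrow R$ (their values are integers, hence in particular real), so that Theorem~\ref{theorem55} becomes directly applicable.

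First I would verify that these two choices satisfy the hypothesis of Theorem~\ref{theorem55}. This is exactly the content of Proposition~\ref{proposition43}, which asserts $\partial_{\mathbf{C}}(Y)=\Sigma_{Y\subseteq X\subseteq\cup\mathbf{C}}\rho_{\mathbf{C}}(X)$ for every $Y\subseteq\cup\mathbf{C}$; rewriting this as $f(Y)=\Sigma_{Y\subseteq X\subseteq U}g(X)$ shows the premise of the theorem holds verbatim. Having checked the hypothesis, I would then simply invoke the conclusion of Theorem~\ref{theorem55}: for any $V\subseteq U$,
\[
g(V)=\Sigma_{V\subseteq Z\subseteq U}((-1)^{|Z|-|V|}f(Z)).
\]
Substituting back $g=\rho_{\mathbf{C}}$ and $f=\partial_{\mathbf{C}}$ yields precisely the claimed identity $\rho_{\mathbf{C}}(V)=\Sigma_{V\subseteq Z\subseteq U}((-1)^{|Z|-|V|}\partial_{\mathbf{C}}(Z))$.

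Since all the genuine combinatorial labor — the induction on $|U|-|V|$ and the binomial cancellation $\Sigma_{i=0}^{n}(-1)^{i}C_{n}^{i}=0$ — has already been carried out inside Theorem~\ref{theorem55}, there is essentially no remaining obstacle. The only point demanding care is the bookkeeping of which mapping plays which role: it must be the summed quantity $\partial_{\mathbf{C}}$ that is assigned to $f$ and the summand $\rho_{\mathbf{C}}$ that is assigned to $g$, matching the orientation of Proposition~\ref{proposition43}. This is the sole place where a direction or sign slip could occur, and once it is fixed the corollary follows immediately.
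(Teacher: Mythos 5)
Your proof is correct and is exactly the paper's approach: the paper derives Corollary~\ref{corollary56} by applying Theorem~\ref{theorem55} with $f=\partial_{\mathbf{C}}$ and $g=\rho_{\mathbf{C}}$, the hypothesis being supplied by Proposition~\ref{proposition43}. You have merely spelled out the instantiation that the paper leaves implicit.
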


Given a $D(\mathbf{C})$, by the above corollary, we can calculate the $\delta(\mathbf{C})$. Again by Proposition~\ref{proposition49}, we obtain $\mathbf{C}$. On the other hand, let $\mathbf{C_{1}}$ and $\mathbf{C_{2}}$ be two coverings on $U$ and $|U|=n$. If $W\subset\{1,2,\cdots,n\}$, $\mathbf{C_{1}}\neq\mathbf{C_{2}}$ does not imply $D_{\mathbf{C_{1}}}(W)\neq D_{\mathbf{C_{1}}}(W)$. To illustrate this, let us see an example.

\begin{example}
\label{example53}
Let $\mathbf{C_{1}}=\{\{a,b\},\{b,c\},\{a,c\}\}$ and $\mathbf{C_{2}}=\{\{a,b,c\},\{a\},\{b\},\{c\}\}$. Then $D_{\mathbf{C_{1}}}(\{1,2\})=\{(\{a\},2),(\{b\},2),(\{c\},2),(\{a,b\},1),(\{b,c\},1),
(\{a,c\},1)\}=D_{\mathbf{C_{2}}}(\{1,2\})$.
\end{example}

In fact, we have the following proposition, in which we denote $|X|$ as $Card(X)$.

\begin{proposition}
\label{proposition54}
Let $\mathbf{C_{1}}$ and $\mathbf{C_{2}}$ be two coverings on $U$ and $Card(U)=n>1$. $\mathbf{C_{1}}\neq\mathbf{C_{2}}$ and $D_{\mathbf{C_{1}}}(\{1,2,\cdots,n-1\})=D_{\mathbf{C_{2}}}(\{1,2,\cdots,n-1\})$ iff $\{\mathbf{C_{1}},\mathbf{C_{2}}\}=\{\{X\subseteq U:2|Card(X)\},\{X\subseteq U:2\nmid Card(X)\}\}$.
\end{proposition}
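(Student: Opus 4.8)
The plan is to reduce everything to the Möbius-inversion identity of Corollary~\ref{corollary56}, since the hypothesis $D_{\mathbf{C_1}}(\{1,2,\cdots,n-1\})=D_{\mathbf{C_2}}(\{1,2,\cdots,n-1\})$ says precisely that $\partial_{\mathbf{C_1}}(X)=\partial_{\mathbf{C_2}}(X)$ for every $X$ with $1\leq|X|\leq n-1$, i.e. for every nonempty proper subset of $U$. Writing $\Delta(Z)=\partial_{\mathbf{C_1}}(Z)-\partial_{\mathbf{C_2}}(Z)$, the function $\Delta$ then vanishes everywhere except possibly at $Z=\emptyset$ and $Z=U$. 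The whole argument turns on the observation that such a sparsely supported $\Delta$, fed through inversion, produces a rigidly determined pattern of $\rho$-values. Throughout I read the even family $\{X\subseteq U:2\mid Card(X)\}$ as its nonempty members, since a covering contains no empty set and $\rho_{\mathbf{C}}(\emptyset)=0$ for every covering, so the empty set is invisible to both $\delta$ and $D$.

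For the forward direction, assume $\mathbf{C_1}\neq\mathbf{C_2}$ together with the equality of repeat degrees. Applying Corollary~\ref{corollary56} at any nonempty $V$, every term with $V\subseteq Z$ and $\Delta(Z)\neq0$ forces $Z=U$ (the term $Z=\emptyset$ cannot occur since $\emptyset\not\supseteq V$), so $\rho_{\mathbf{C_1}}(V)-\rho_{\mathbf{C_2}}(V)=(-1)^{n-|V|}b$, where $b=\rho_{\mathbf{C_1}}(U)-\rho_{\mathbf{C_2}}(U)$. Because $\mathbf{C_1}\neq\mathbf{C_2}$, Proposition~\ref{proposition48} supplies a nonempty $V$ on which $\rho_{\mathbf{C_1}}$ and $\rho_{\mathbf{C_2}}$ differ, forcing $b\neq0$, hence $b\in\{-1,1\}$ as a difference of two $\{0,1\}$-values. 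Since the pair $\{\mathbf{C_1},\mathbf{C_2}\}$ and the hypotheses are symmetric in the two coverings, I may assume $b=1$. Then for each nonempty $V$ the quantity $\rho_{\mathbf{C_1}}(V)-\rho_{\mathbf{C_2}}(V)=(-1)^{n-|V|}$ lies in $\{-1,1\}$, and $\{0,1\}$-valuedness of $\rho$ pins both values down exactly: when $|V|$ and $n$ have the same parity we get $\rho_{\mathbf{C_1}}(V)=1,\rho_{\mathbf{C_2}}(V)=0$, and otherwise $\rho_{\mathbf{C_1}}(V)=0,\rho_{\mathbf{C_2}}(V)=1$. Hence $\mathbf{C_1}$ consists of the nonempty $V$ whose size matches the parity of $n$ and $\mathbf{C_2}$ of those of opposite parity; splitting on the parity of $n$ shows $\{\mathbf{C_1},\mathbf{C_2}\}$ is exactly the even/odd pair.

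For the reverse direction, assume (after possibly swapping) $\mathbf{C_1}=\{X\subseteq U:2\mid Card(X)\}$ and $\mathbf{C_2}=\{X\subseteq U:2\nmid Card(X)\}$. These are distinct (a singleton lies only in $\mathbf{C_2}$) and, as $n>1$, both cover $U$, so $\mathbf{C_1}\neq\mathbf{C_2}$. To compare repeat degrees, fix $X$ with $1\leq|X|\leq n-1$ and note that the supersets $K$ of $X$ in $U$ correspond bijectively to subsets $S$ of $U-X$ via $K=X\cup S$, with $|K|=|X|+|S|$. Thus $\partial_{\mathbf{C_1}}(X)$ counts the $S\subseteq U-X$ with $|S|\equiv|X|\pmod2$ and $\partial_{\mathbf{C_2}}(X)$ counts those with $|S|\not\equiv|X|\pmod2$. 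Since $m:=|U-X|=n-|X|\geq1$, a set of size $m$ has exactly $2^{m-1}$ subsets of each parity, so both counts equal $2^{m-1}$ and $\partial_{\mathbf{C_1}}(X)=\partial_{\mathbf{C_2}}(X)$; this gives $D_{\mathbf{C_1}}(\{1,2,\cdots,n-1\})=D_{\mathbf{C_2}}(\{1,2,\cdots,n-1\})$.

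The main obstacle is not any single hard step but the careful boundary bookkeeping: the entire phenomenon hinges on the support of $\Delta$ being exactly $\{\emptyset,U\}$, and on $X=U$ (where $m=0$, so $U-X$ has one even subset and no odd subset) being the unique place where the even/odd subset counts fail to match. This is precisely why the index set is $\{1,\cdots,n-1\}$ rather than $\{1,\cdots,n\}$. The other delicate point is tracking the parity of $n$ so that the abstract conclusion ``$\mathbf{C_1}=\{V\neq\emptyset:|V|\equiv n\pmod2\}$'' is correctly identified as the even or the odd family; handling the two parities uniformly, and keeping the harmless empty set out of the picture, is where I would be most careful.
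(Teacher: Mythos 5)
Your proof is correct, but the forward direction takes a genuinely different and arguably cleaner route than the paper's. The paper first uses Proposition~\ref{proposition52} to conclude $\partial_{\mathbf{C_{1}}}(U)\neq\partial_{\mathbf{C_{2}}}(U)$, and then runs a from-scratch induction on the co-cardinality $k=n-|X|$, with explicit base cases $k=1,2$ and binomial identities such as $\Sigma_{i}C_{t}^{2i}=\Sigma_{i}C_{t}^{2i-1}-1$ to pin down $\rho_{\mathbf{C_{1}}}(X)$ and $\rho_{\mathbf{C_{2}}}(X)$ level by level; in effect it re-derives a special case of M\"{o}bius inversion by hand. You instead apply Corollary~\ref{corollary56} to both coverings and subtract: since the difference $\Delta=\partial_{\mathbf{C_{1}}}-\partial_{\mathbf{C_{2}}}$ is supported in $\{\emptyset,U\}$ and $\emptyset$ never occurs as a superset of a nonempty $V$, the inversion sum collapses to the single term $(-1)^{n-|V|}b$ with $b=\rho_{\mathbf{C_{1}}}(U)-\rho_{\mathbf{C_{2}}}(U)$, and $\{0,1\}$-valuedness of $\rho$ forces the even/odd dichotomy at once. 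This is shorter, avoids the binomial bookkeeping and the two base cases, and has the aesthetic advantage of actually deploying the machinery (Theorem~\ref{theorem55} and Corollary~\ref{corollary56}) that the paper develops immediately before this proposition but then does not use in its own proof. Your reverse direction is the same computation as the paper's in different clothing: the paper's identity $\Sigma_{i}C_{n-t}^{2i+1}=\Sigma_{i}C_{n-t}^{2i}$ is exactly your statement that a nonempty set of size $m=n-|X|$ has $2^{m-1}$ subsets of each parity, reached via the bijection $K\mapsto K-X$. One further point in your favor: you explicitly address the fact that $\{X\subseteq U:2|Card(X)\}$ literally contains $\emptyset$ and hence, read verbatim, is not a covering under Definition~\ref{definition1}; reading that family as its nonempty members (as you do, and as Example~\ref{example53} confirms is the intent) is needed for the statement to be true at all, and the paper leaves this repair implicit.
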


\begin{proof}
$(\Rightarrow)$: Suppose $\partial_{\mathbf{C_{1}}}(U)=\partial_{\mathbf{C_{2}}}(U)$. Then $D(\mathbf{C_{1}})=\{(U,\partial_{\mathbf{C_{1}}}(U))\}\cup D_{\mathbf{C_{1}}}(\{1,2,\\
\cdots,n-1\})=\{(U,\partial_{\mathbf{C_{2}}}(U))\}\cup D_{\mathbf{C_{2}}}(\{1,2,\cdots,n-1\})=D(\mathbf{C_{2}})$. By Proposition~\ref{proposition52}, we have that $\mathbf{C_{1}}=\mathbf{C_{2}}$. It is contradictory. Thus $\partial_{\mathbf{C_{1}}}(U)\neq\partial_{\mathbf{C_{2}}}(U)$. Without loss of generality, suppose $\partial_{\mathbf{C_{1}}}(U)=0$ and $\partial_{\mathbf{C_{2}}}(U)=1$. It is obvious $\rho_{\mathbf{C_{1}}}(U)=0$ and $\rho_{\mathbf{C_{2}}}(U)=1$. For any $X\subseteq U$, let $Card(U)-Card(X)=k$. We claim that $X\in\mathbf{C_{1}}$ iff $2\nmid k$ and $X\in\mathbf{C_{2}}$ iff $2|k$. We prove this assertion using induction on $k$. If $k=1$, by Proposition~\ref{proposition43}, we have that $\partial_{\mathbf{C_{1}}}(X)=\rho_{\mathbf{C_{1}}}(X)+\rho_{\mathbf{C_{1}}}(U)\leq1$ and $\partial_{\mathbf{C_{2}}}(X)=\rho_{\mathbf{C_{2}}}(X)+\rho_{\mathbf{C_{2}}}(U)\geq1$. Thus $\partial_{\mathbf{C_{1}}}(X)=\partial_{\mathbf{C_{2}}}(X)=1$, $\rho_{\mathbf{C_{1}}}(X)=1$ and $\rho_{\mathbf{C_{2}}}(X)=0$. Hence $X\in\mathbf{C_{1}}$ and $X\notin\mathbf{C_{2}}$. If $k=2$, by Proposition~\ref{proposition43}, we have that $\partial_{\mathbf{C_{1}}}(X)=\rho_{\mathbf{C_{1}}}(X)+\Sigma_{X\subset Y\subseteq\cup\mathbf{C}}\rho_{\mathbf{C_{1}}}(Y)=\rho_{\mathbf{C_{1}}}(X)+C_{2}^{1}=\rho_{\mathbf{C_{1}}}(X)+2\geq2$ and $\partial_{\mathbf{C_{2}}}(X)=\rho_{\mathbf{C_{2}}}(X)+\rho_{\mathbf{C_{2}}}(U)=\rho_{\mathbf{C_{2}}}(X)+1\leq2$.
Thus $\partial_{\mathbf{C_{1}}}(X)=\partial_{\mathbf{C_{2}}}(X)=2$, $\rho_{\mathbf{C_{1}}}(X)=0$ and $\rho_{\mathbf{C_{2}}}(X)=1$. Hence $X\notin\mathbf{C_{1}}$ and $X\in\mathbf{C_{2}}$.
Assume this assertion is true for $k\leq t-1$. Now assume $k=t$. It is obvious $2|t$ or $2\nmid t$. If $2|t$, by Proposition~\ref{proposition43} and the assumption of the induction, we have that $\partial_{\mathbf{C_{1}}}(X)=\rho_{\mathbf{C_{1}}}(X)+\Sigma_{X\subset Y\subseteq\cup\mathbf{C}}\rho_{\mathbf{C_{1}}}(Y)=\rho_{\mathbf{C_{1}}}(X)+\Sigma_{i=1}^{\frac{t}{2}}C_{t}^{2i-1}$ and $\partial_{\mathbf{C_{2}}}(X)=\rho_{\mathbf{C_{2}}}(X)+\Sigma_{X\subset Y\subseteq\cup\mathbf{C}}\rho_{\mathbf{C_{2}}}(Y)=\rho_{\mathbf{C_{2}}}(X)+\Sigma_{i=1}^{\frac{t}{2}}C_{t}^{2i}$. By $\Sigma_{i=1}^{\frac{t}{2}}C_{t}^{2i}=\Sigma_{i=1}^{\frac{t}{2}}C_{t}^{2i-1}-1$ and $\partial_{\mathbf{C_{1}}}(X)=\partial_{\mathbf{C_{2}}}(X)$, we have that $\rho_{\mathbf{C_{2}}}(X)-\rho_{\mathbf{C_{1}}}(X)=1$. Hence $\rho_{\mathbf{C_{1}}}(X)=0$ and $\rho_{\mathbf{C_{2}}}(X)=1$. Therefore $X\notin\mathbf{C_{1}}$ and $X\in\mathbf{C_{2}}$.
If $2\nmid t$, by Proposition~\ref{proposition43} and the assumption of the induction, we have that $\partial_{\mathbf{C_{1}}}(X)=\rho_{\mathbf{C_{1}}}(X)+\Sigma_{X\subset Y\subseteq\cup\mathbf{C}}\rho_{\mathbf{C_{1}}}(Y)=\rho_{\mathbf{C_{1}}}(X)+\Sigma_{i=1}^{\frac{t-1}{2}}C_{t}^{2i}$ and $\partial_{\mathbf{C_{2}}}(X)=\rho_{\mathbf{C_{2}}}(X)+\Sigma_{X\subset Y\subseteq\cup\mathbf{C}}\rho_{\mathbf{C_{2}}}(Y)=\rho_{\mathbf{C_{2}}}(X)+\Sigma_{i=1}^{\frac{t+1}{2}}C_{t}^{2i-1}$. By $\Sigma_{i=1}^{\frac{t-1}{2}}C_{t}^{2i}=\Sigma_{i=1}^{\frac{t+1}{2}}C_{t}^{2i-1}-1$ and $\partial_{\mathbf{C_{1}}}(X)=\partial_{\mathbf{C_{2}}}(X)$, we have that $\rho_{\mathbf{C_{1}}}(X)-\rho_{\mathbf{C_{2}}}(X)=1$. Hence $\rho_{\mathbf{C_{1}}}(X)=1$ and $\rho_{\mathbf{C_{2}}}(X)=0$. Therefore $X\in\mathbf{C_{1}}$ and $X\notin\mathbf{C_{2}}$.

$(\Leftarrow)$: Without loss of generality, suppose $\mathbf{C_{1}}=\{X\subseteq U:2\nmid Card(X)\}$ and $\mathbf{C_{2}}=\{X\subseteq U:2|Card(X)\}$. It is obvious $\mathbf{C_{1}}\neq\mathbf{C_{2}}$. Let $X\subset U$ and $|X|=t$. It is obvious $2|t$ or $2\nmid t$. If $2|t$, $\partial_{\mathbf{C_{1}}}(X)=\Sigma_{i=0}^{[\frac{n-t-1}{2}]}C_{n-t}^{2i+1}$ and $\partial_{\mathbf{C_{2}}}(X)=\Sigma_{i=0}^{[\frac{n-t}{2}]}C_{n-t}^{2i}$. If $2\nmid t$, $\partial_{\mathbf{C_{1}}}(X)=\Sigma_{i=0}^{[\frac{n-t}{2}]}C_{n-t}^{2i}$ and $\partial_{\mathbf{C_{2}}}(X)=\Sigma_{i=0}^{[\frac{n-t-1}{2}]}C_{n-t}^{2i+1}$. Since $\Sigma_{i=0}^{[\frac{n-t-1}{2}]}C_{n-t}^{2i+1}=\Sigma_{i=0}^{[\frac{n-t}{2}]}C_{n-t}^{2i}$, $\partial_{\mathbf{C_{1}}}(X)=\partial_{\mathbf{C_{2}}}(X)$. Thus $D_{\mathbf{C_{1}}}(\{1,2,\cdots,n-1\})=D_{\mathbf{C_{1}}}(\{1,2,\cdots,n-1\})$.
$\Box$
\end{proof}

By the above proposition, we know that partial information of repeat degree cannot determine the covering.
In order to calculate the covering by repeat degree, we have to know $D(\mathbf{C})$. It is obvious $P_{\mathbf{C}}(x)$ depends on only $D_{\mathbf{C}}(\{1,2\})$.  Let $\mathbf{C_{1}}$ and $\mathbf{C_{2}}$ be two coverings on $U$. By the above proposition, we know that for any $x\in U$, $P_{\mathbf{C_{1}}}(x)=P_{\mathbf{C_{2}}}(x)$ does not imply $\mathbf{C_{1}}=\mathbf{C_{2}}$. Thus if we know nothing about $\mathbf{C}$ but $D_{\mathbf{C}}(\{1,2\})$, we cannot calculate $\mathbf{C}$. However, we can still calculate $Cov(\mathbf{C})$ by Theorem~\ref{theorem40}. And in this case, with the help of repeat degree, some issues, such as whether $Cov(\mathbf{C})$ is a reduct, whether $Cov(\mathbf{C})$ is a partition, can also be determined.

\section{Conclusions }
\label{S:Conclusions}
In this paper, we studied further the applications of repeat degree on coverings of neighborhoods. We first gave a sufficient and necessary condition for a covering of neighborhoods to be the reduct of the covering inducing it. Then we gave a sufficient and necessary condition for two coverings induce a same relation and a same covering of neighborhoods. Finally, the method of calculating the covering by repeat degree is given. This paper shows that repeat degree plays an important role in the study of coverings of neighborhoods.

\section*{Acknowledgments}
This work is supported in part by the National Natural Science Foundation of China under Grant No. 61170128, the Natural Science Foundation of Fujian Province, China, under Grant No. 2012J01294, and the Science and Technology Key Project of Fujian Province, China, under Grant No. 2012H0043.


\end{document}